\documentclass{article}



\usepackage[preprint]{neurips_2025}



\usepackage[utf8]{inputenc} 
\usepackage[T1]{fontenc}    
\usepackage{hyperref}       
\usepackage{url}            
\usepackage{booktabs}       
\usepackage{amsfonts}       
\usepackage{nicefrac}       
\usepackage{microtype}      
\usepackage{xcolor}         

\usepackage{subcaption}
\usepackage{graphicx}
\usepackage{amsmath}
\usepackage{multirow} 
\usepackage{amsthm}
\usepackage{hyperref}    
\usepackage{cleveref}    
\usepackage{adjustbox}
\usepackage{enumitem}
\usepackage{algorithm}
\usepackage{algpseudocode}
\usepackage{svg}
\usepackage{multirow}  
\usepackage{makecell}

\title{Policy Disruption in Reinforcement Learning:
Adversarial Attack with Large Language Models and Critical State Identification}

%

\author{%
  Junyong Jiang, Buwei Tian, Chenxing Xu, Songze Li, Lu Dong \\
  School of Cyber Science and Engineering\\
  Southeast University\\
  Nanjing, CN, 210000\\
  \texttt{junyongjiang@seu.edu.cn} \\
}

\newtheorem{theorem}{Theorem}

\captionsetup[table]{font=normalsize}

\begin{document}

\maketitle

\begin{abstract}
  Reinforcement learning (RL) has achieved remarkable success in fields like robotics and autonomous driving, but adversarial attacks—designed to mislead RL systems—remain challenging. Existing approaches often rely on modifying the environment or policy, limiting their practicality.
This paper proposes an adversarial attack method in which existing agents in the environment guide the target policy to output suboptimal actions without altering the environment. We propose a reward iteration optimization framework that leverages large language models (LLMs) to generate adversarial rewards explicitly tailored to the vulnerabilities of the target agent, thereby enhancing the effectiveness of inducing the target agent toward suboptimal decision-making. Additionally, a critical state identification algorithm is designed to pinpoint the target agent's most vulnerable states, where suboptimal behavior from the victim leads to significant degradation in overall performance. Experimental results in diverse environments demonstrate the superiority of our method over existing approaches. Our code is available at \url{https://anonymous.4open.science/r/ARCS_NIPS-B4F1}.
\end{abstract}

\section{Introduction}
RL driven by advances in deep learning, has become a key technology for sequential decision-making, achieving superhuman performance in a variety of domains including autonomous driving~\cite{liu2024augmenting, li2024think2drive}, robotic control~\cite{haarnoja2024learning, chen2024rlingua}, adversarial games~\cite{lechner2024gigastep,10662897}, and board games like Go~\cite{silver2018general}, as well as complex multi-agent collaboration~\cite{oroojlooy2023review, xiao2024macns}.
 However, despite its success in many applications, RL still faces significant challenges in robustness and security. Studies have shown that even well-trained RL policies are highly susceptible to adversarial attacks \cite{ilahi2021challenges, extra}, leading to catastrophic consequences in safety-critical domains (e.g., autonomous driving and industrial control) \cite{11004053}, thus limiting RL’s real-world deployment.

However, existing adversarial methods—whether by poisoning the environment \cite{10.5555/3524938.3525677}, perturbing observations~\cite{state3}, or injecting malicious actions~\cite{action1}—assume unfettered access to the training environment or the agent’s policy interfaces. In practice, this rarely applies, limiting their feasibility in settings where attackers cannot directly access servers or manipulate environmental states, such as commercial gaming platforms or autonomous driving systems. To circumvent this limitation, recent work has investigated adversarial methods that avoid direct environment tampering by embedding adversarial agents into the system and leveraging multi-agent interactions to indirectly disrupt policy learning~\cite{gleave2020adversarial, wu2021adversarial, guo2021adversarial}. While such methods eliminate the need for direct environment manipulation, they are typically constrained by fixed and generic attack objectives that lack task-specific guidance and fail to exploit the unique vulnerabilities of the victim policy. This limits both the robustness and generalization of the resulting attacks, particularly in dynamic or complex environments.

To address these shortcomings, we propose ARCS (Adversarial Rewards and Critical State Identification), an adaptive adversarial framework consisting of a reward iteration optimization module and a critical state identification mechanism. Specifically, we design a reward iteration optimization framework that leverages LLMs to adaptively generate customized adversarial reward functions aligned with the vulnerability of the victim policy.
Furthermore, we develop a critical state identification mechanism that selects critical states where suboptimal actions by the victim have a disproportionately large impact on task outcomes. To better exploit these states, we inject additional rewards during attacker training, guiding the policy to focus on situations where influencing the victim’s decisions yields greater adversarial returns.
Extensive experiments across diverse environments demonstrate that ARCS significantly outperforms existing adversarial policy training methods in terms of attack success rates, validating its effectiveness in adversarial policy training.
Our main contributions are summarized as follows: \begin{itemize}[leftmargin=1.5em]
\item We propose a reward iteration optimization framework that leverages LLMs to generate adversarial reward functions, enabling adaptive and targeted guidance for adversarial policy training;
\item We develop a critical state identification mechanism that selects critical states where the victim's suboptimal actions significantly affect task outcomes. These states are used to guide attacker training, enabling more effective exploitation of strategic weaknesses.

\item We introduce ARCS, a novel adversarial attack framework where existing agent guide the victim policy toward suboptimal behaviors, and validate its superiority through extensive experiments across multiple environments.
\end{itemize}

\section{Related Work}

Adversarial attacks in RL have garnered substantial attention, with a variety of approaches developed to undermine the learning and decision-making processes of RL agents \cite{10263803, standen2025adversarial}. Existing methods can be broadly categorized into environment poisoning, state perturbation, adversarial action insertion, and indirect adversarial policy training through agent interactions.

\textbf{Environment Poisoning.}
Environment poisoning attacks manipulate rewards or transition dynamics to mislead learning. Prior work has formulated optimal poisoning strategies under full environment knowledge~\cite{10.5555/3524938.3525677,10.5555/3524938.3525979}, or designed adaptive reward perturbations based on internal Q-values~\cite{10.5555/3463952.3464113}. Other methods target federated reinforcement learning by manipulating critic updates~\cite{ma2024reward}, or propose joint reward-action attacks in multi-agent systems with access to feedback channels~\cite{liu2023efficient}. These approaches typically assume privileged access to environment dynamics, internal models, or communication channels. In contrast, our method performs black-box attacks without modifying the environment or relying on internal signals, by learning tailored adversarial rewards through interactive optimization.

\textbf{State Perturbation.}
State perturbation attacks mislead agents by injecting small but adversarially crafted noise into observations~\cite{state1,state2}. Some methods optimize per-step perturbations via policy gradients~\cite{state3}, or design universal perturbations applicable across episodes~\cite{state4}. Recent work reformulates the attack in function space and employs deceptive trajectories to minimize long-term reward~\cite{qiaoben2024understanding}, but requires either policy access or surrogate models. These attacks, while effective, assume the ability to intercept and modify input streams before action selection, which is impractical in many real-world applications. However, our method avoids direct observation tampering and instead influences agent behavior solely via strategic interaction.

\textbf{Adversarial Action Insertion.}
Action-space attacks aim to mislead the agent by directly altering its selected actions \cite{bai2025rat}. Some methods inject gradient-based perturbations to shift actions within bounded budgets~\cite{action1}, while others train adversarial agents to override or replace actions through learned strategies~\cite{9147846}. Recent work proposes a decoupled adversarial policy that separately decides when and how to intervene, using a pre-built perturbation database to induce targeted actions~\cite{action2}. While effective, these approaches typically assume white-box access to the policy network or its gradients, and require control over the agent’s action channel. Our method operates purely through black-box interaction, without manipulating action outputs or requiring internal access.

\textbf{Adversarial Policy Training via Agent Interactions.}
Recent efforts have explored indirect adversarial attacks by training agents that interact with and disrupt victim policies in multi-agent environments. Early approaches adopt simple objectives, such as maximizing the adversary’s own win rate~\cite{gleave2020adversarial}, but fail to exploit specific weaknesses in the victim’s behavior. Later methods introduce crafted loss functions to amplify policy differences or degrade victim returns~\cite{wu2021adversarial,guo2021adversarial}, yet these objectives are manually designed and shared across tasks. More recently,~\cite{liu2024rethinking} focuses on attack stealth by limiting behavioral deviation, aiming to avoid detection rather than improve attack effectiveness. In contrast, our ARCS framework comprises two key components: a large language model that generates victim-specific reward functions, and a critical-state module that selectively identifies vulnerable decision points. Together, they enable adaptive and precise black-box attacks without access to the victim’s policy model or environment internals.


\section{Proposed Technique}
\subsection{Problem Scope and Assumption}

We consider a two-player adversarial setting modeled as a Markov Decision Process (MDP). One agent, referred to as the \emph{victim} (denoted $\mathcal{O}$), follows a fixed policy $\pi_{\mathcal{O}}$ and aims to accomplish a primary task. The other agent, referred to as the \emph{attacker} (denoted $\mathcal{A}$), learns an adversarial policy $\pi_{\mathcal{A}}$ to disrupt the victim's performance.

Formally, at each timestep $t$, the environment is in a state $s_t$. The victim and attacker independently observe their respective observations $o^v_t$ and $o^a_t$, and simultaneously select actions $a^v_t \sim \pi_{\mathcal{O}}(\cdot|o^v_t)$ and $a^a_t \sim \pi_{\mathcal{A}}(\cdot|o^a_t)$. These actions are executed, leading the environment to transition to the next state $s_{t+1}$ according to an unknown transition function $P(s_{t+1}|s_t, a^v_t, a^a_t)$. Both agents subsequently receive new observations and rewards based on the updated environment state.

We define the adversarial setting through the following components:
\begin{itemize}[leftmargin=1.5em]
    \item \textbf{Attacker's Goal:} To degrade the performance of a fixed victim policy $\pi_O$ by inducing suboptimal actions through interactive influence.
    \item \textbf{Attacker's Knowledge:} The attacker has no access to the victim’s architecture, parameters, gradients, or environment dynamics.
    \item \textbf{Attacker's Capability:} The attacker observes both its own and the victim’s observations, and influences the victim solely through its own actions during interaction. It cannot modify the victim’s observations, actions, rewards, or internal mechanisms.
\end{itemize}

Prior work \cite{gleave2020adversarial, guo2021adversarial, wu2021adversarial} has explored adversarial learning through agent interactions. However, these approaches often assume static adversarial objectives or partial access to victim behaviors, limiting their adaptability across dynamic environments.
In contrast, our method introduces an adaptive adversarial framework that dynamically designs reward functions and strategically identifies {critical decision points}—states where suboptimal actions by the victim have a disproportionately large impact on task outcomes—thereby enabling effective adversarial influence without requiring internal access to the victim or the environment.

\subsection{Challenge and Technical Overview}

Adversarial policy training in black-box multi-agent environments presents two core challenges: 
the lack of reward functions tailored to the specific vulnerabilities of victim agents, and 
 the difficulty of identifying pivotal states where suboptimal actions by the victim have a disproportionately large impact on task outcomes. 
Conventional approaches often employ hand-crafted or fixed reward structures that fail to generalize across tasks, leading to inefficient learning and weak adversarial impact.



\begin{figure}[ht]
\vskip 0.2in
\begin{center}
\includegraphics[width=1.0\textwidth]{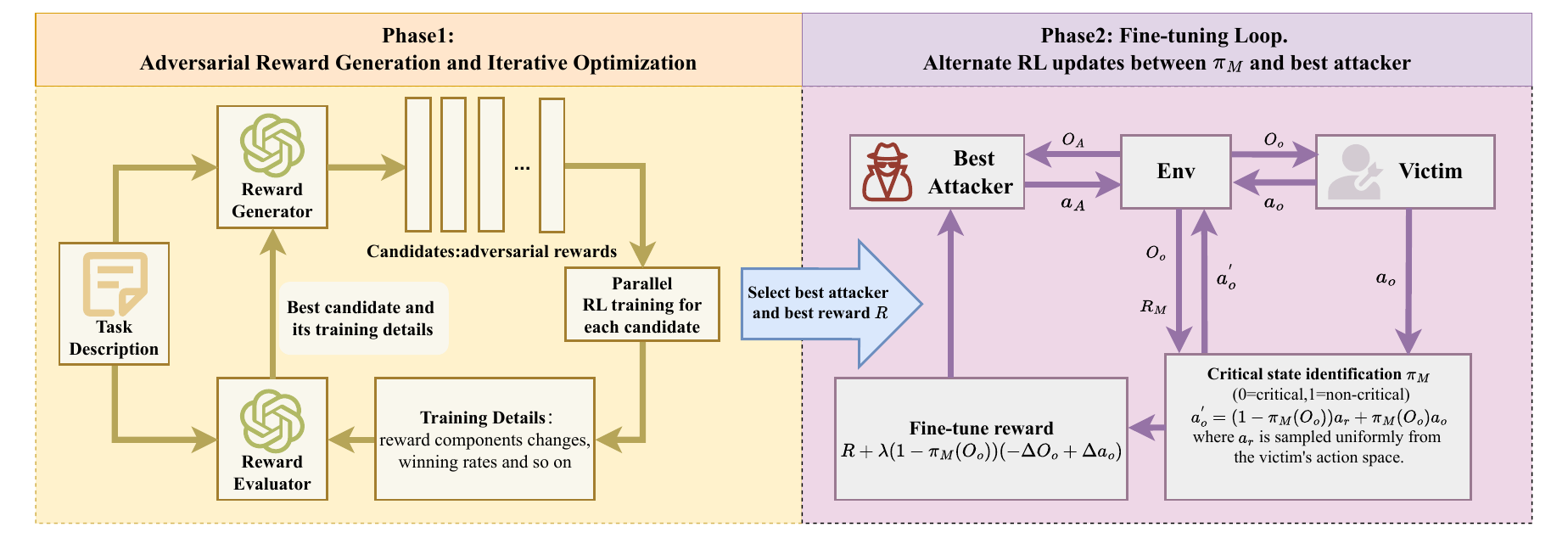}
\caption{Overview of the ARCS training framework. The left part shows LLM-guided adversarial reward generation. The right part depicts critical state identification and fine-tuning, which guide the attacker to focus perturbations on high-impact decision points.}
\label{fig:overview}
\end{center}
\vskip -0.2in
\end{figure}

\noindent
To address these challenges, we propose \textbf{ARCS}, an adaptive adversarial framework that combines LLM-guided reward generation with critical state-aware fine-tuning.
Figure~\ref{fig:overview} illustrates the ARCS framework, which integrates two modules. The left part shows the LLM-guided reward optimization process: after each round of attacker training, performance statistics are evaluated by the Reward Evaluator, which summarizes the effectiveness of each reward candidate. This feedback is then provided to the Reward Generator, which uses it to produce updated reward functions for the next training iteration.
The right part depicts the critical state identification module, where an auxiliary policy selects high-impact states. These states are used to further train the attacker policy, focusing learning on pivotal decision regions. Together, the two modules form a closed loop in which global reward structures and local state signals are jointly optimized to enhance adversarial effectiveness in a black-box setting. 

For example, in the Sumo-Human environment \cite{bansal2018emergent}, where two agents attempt to push each other out of a circular arena while maintaining balance, existing methods primarily rely on a simple win signal without explicitly targeting the behavioral weaknesses of the victim, resulting in inefficient adversarial learning. Our LLM-generated rewards incorporate detailed elements such as penalties for tilting and bonuses for destabilization, enabling more effective and targeted training. 
Similarly, enhancing attacker learning at critical states rather than uniformly across all states results in more efficient and targeted disruption.

\subsection{Technical Details}
\label{sec:method}
\textbf{Adversarial Reward Generation and Iterative Optimization.}  
We employ two LLMs in this process: a \emph{Reward Generator}, responsible for producing adversarial reward functions, and a \emph{Reward Evaluator}, tasked with assessing the effectiveness of the generated rewards.

To support effective reward generation, we design structured prompts comprising three components: a base prompt that describes the task context, and two functional templates dedicated to reward generation and reward evaluation, respectively. The base prompt specifies environment settings, available variables, and task objectives. The generation template prompts the Reward Generator to produce new candidate rewards, while the evaluation template instructs the Reward Evaluator to select the best-performing reward from previous rounds based on training outcomes. Full prompt templates are provided in Appendix~C.

In the reward generation module, the process begins with the Reward Generator producing a set of candidate adversarial reward functions. Each candidate is used to train the attacker policy for a fixed number of steps. During training, key statistics—such as changes in individual reward components and the trajectory of success rates—are recorded as feedback for evaluation.

After training, the structured prompt templates are dynamically updated in two ways. First, the collected training details are incorporated into the prompt provided to the Reward Evaluator, enabling it to assess the effectiveness of all candidate rewards based on empirical evidence. Second, only the best-performing reward function identified by the Reward Evaluator, along with its associated training details, is incorporated into the prompt for the next invocation of the Reward Generator. This ensures that the generation of new adversarial rewards is guided by the most effective prior experience in subsequent iterations.

A key distinction between the initial and subsequent iterations lies in the availability of empirical feedback. During the first iteration, the Reward Generator is guided solely by static task information, including the environment description and available variables. In subsequent iterations, the prompts are dynamically enriched with training feedback derived from the best-performing reward function selected by the Reward Evaluator, enabling the Reward Generator to iteratively refine its outputs based on the most effective prior experience.

Through several rounds of generation, training, evaluation, and refinement, an effective adversarial reward function is ultimately obtained. Once the optimal adversarial reward function is determined, it is directly used for training the attacker agent. 

\textbf{Critical State Identification.}
After obtaining the optimal adversarial reward function $R$, we introduce a critical state identification mechanism to further enhance adversarial effectiveness. The key idea is to identify pivotal states where suboptimal actions by the victim have a strong impact on outcomes, and to guide attacker training to exploit these situations more effectively.

To achieve this, we introduce an auxiliary binary policy $\pi_{\mathcal{M}}$, where $\pi_{\mathcal{M}}(s) \in \{0,1\}$, to determine whether to replace the victim’s action at state $s$. The perturbed policy $\pi$ is defined as:
\[
\pi(s) = 
\begin{cases}
\pi_{\mathcal{O}}(s), & \text{if } \pi_{\mathcal{M}}(s) = 1, \\
\text{random action}, & \text{if } \pi_{\mathcal{M}}(s) = 0.
\end{cases}
\]
A state is labeled as critical if replacing the victim’s action with a random one causes a significant drop in overall performance.
Our objective is to minimize the victim’s cumulative return by altering its actions at only a limited number of states:
\begin{equation}
\label{eq:problem}
\begin{aligned}
\text{minimize} \quad & \eta(\pi) \\
\text{subject to} \quad & C_2 \leq N \leq C_1,
\end{aligned}
\end{equation}
where $N$ denotes the number of perturbed states, and $\eta(\pi)$ is the expected cumulative reward of the victim following the perturbed policy.

However, directly minimizing $\eta(\pi)$ is challenging due to its dependence on future state distributions. To facilitate stable and efficient updates, we adopt a local approximation around the current policy.
Specifically, we denote $\pi_{\text{old}}$ as the perturbed policy from the previous iteration. To constrain the update within a trust region and prevent large policy shifts, we approximate $\eta(\pi)$ with a first-order surrogate objective centered at $\pi_{\text{old}}$:
\begin{equation}
L_{\pi_{\text{old}}}(\pi) = \eta(\pi_{\text{old}}) + \sum_s \rho_{\pi_{\text{old}}}(s) \sum_a \pi(a|s) A_{\pi_{\text{old}}}(s,a),
\end{equation}
where $\rho_{\pi_{\text{old}}}(s)$ denotes the discounted visitation frequency under $\pi_{\text{old}}$, and $A_{\pi_{\text{old}}}(s,a)$ is the corresponding advantage function.
To further restrict the extent of policy change and prevent drastic deviations, we introduce a regularization term based on the maximum KL divergence between the new and old policies:
\begin{equation}
M(\pi) = L_{\pi_{\text{old}}}(\pi) + C \cdot \max_s \text{KL}(\pi_{\text{old}}(\cdot|s) \parallel \pi(\cdot|s)),
\end{equation}
where $C$ is a positive regularization coefficient.
We formally state the following result:

\begin{theorem}[Policy Degradation Monotonicity]
\label{theorem:monotonicity}
Minimizing $M(\pi)$ guarantees non-increasing expected return:
\[
\eta(\pi) \leq \eta(\pi_{\text{old}}).
\]
\end{theorem}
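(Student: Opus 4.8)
The plan is to adapt the classical trust-region argument of Kakade--Langford and Schulman et al.\ to the minimization (degradation) setting. The backbone is the performance-difference identity: for any two stochastic policies $\pi$ and $\pi_{\text{old}}$,
\[
\eta(\pi) = \eta(\pi_{\text{old}}) + \sum_s \rho_{\pi}(s) \sum_a \pi(a|s) A_{\pi_{\text{old}}}(s,a),
\]
which differs from the surrogate $L_{\pi_{\text{old}}}(\pi)$ only in that it uses the visitation frequency $\rho_{\pi}$ of the \emph{new} policy in place of $\rho_{\pi_{\text{old}}}$. Hence $L_{\pi_{\text{old}}}$ is a first-order approximation of $\eta$ around $\pi_{\text{old}}$, and $\eta(\pi) - L_{\pi_{\text{old}}}(\pi) = \sum_s \big(\rho_{\pi}(s) - \rho_{\pi_{\text{old}}}(s)\big)\,\bar A_{\pi}(s)$ with $\bar A_{\pi}(s) := \sum_a \pi(a|s) A_{\pi_{\text{old}}}(s,a)$, so the gap is controlled entirely by the discrepancy between the two visitation distributions.

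First I would establish the identity by unrolling $\eta(\pi)$ and telescoping the advantage terms $A_{\pi_{\text{old}}}(s,a) = r(s,a) + \gamma\,\mathbb{E}_{s'}[V_{\pi_{\text{old}}}(s')] - V_{\pi_{\text{old}}}(s)$ along trajectories generated by $\pi$. Next I would bound $|\eta(\pi) - L_{\pi_{\text{old}}}(\pi)|$ by a coupling argument: construct a joint distribution over trajectory pairs drawn from $\pi$ and $\pi_{\text{old}}$ that select identical actions at every visited state except on an event of probability at most $\alpha := \max_s D_{\mathrm{TV}}\big(\pi_{\text{old}}(\cdot|s)\,\|\,\pi(\cdot|s)\big)$ per step; tracking the first timestep at which the coupled trajectories diverge, summing the geometrically damped contribution of the already-diverged mass, and using $|A_{\pi_{\text{old}}}| \le \epsilon := \max_{s,a}|A_{\pi_{\text{old}}}(s,a)|$ yields a bound of order $\tfrac{\epsilon\gamma}{(1-\gamma)^2}\alpha^2$. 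Pinsker's inequality, $\alpha^2 \le \tfrac{1}{2}\max_s D_{\mathrm{KL}}\big(\pi_{\text{old}}(\cdot|s)\,\|\,\pi(\cdot|s)\big)$, then converts this into the KL penalty, so that for $C$ at least $\tfrac{4\epsilon\gamma}{(1-\gamma)^2}$ (or any larger value) we obtain the one-sided bound
\[
\eta(\pi) \;\le\; L_{\pi_{\text{old}}}(\pi) + C\cdot\max_s D_{\mathrm{KL}}\big(\pi_{\text{old}}(\cdot|s)\,\|\,\pi(\cdot|s)\big) \;=\; M(\pi).
\]

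Finally I would evaluate $M$ at $\pi_{\text{old}}$: the advantage term $\bar A_{\pi_{\text{old}}}(s) = \sum_a \pi_{\text{old}}(a|s) A_{\pi_{\text{old}}}(s,a) = V_{\pi_{\text{old}}}(s) - V_{\pi_{\text{old}}}(s) = 0$ for every $s$, and the KL term vanishes, so $M(\pi_{\text{old}}) = \eta(\pi_{\text{old}})$. Since $\pi_{\text{old}}$ is feasible for the constraint $C_2 \le N \le C_1$ (it is the iterate produced in the previous round), any $\pi$ that minimizes $M$ over the feasible set satisfies $M(\pi) \le M(\pi_{\text{old}})$, and chaining this with the displayed upper bound gives $\eta(\pi) \le M(\pi) \le M(\pi_{\text{old}}) = \eta(\pi_{\text{old}})$, which is the claim. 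I expect the visitation-perturbation bound in the middle step to be the main obstacle: making the coupling precise and correctly accounting for the discounted tail of the divergence event is the delicate part, whereas the identity and the final substitution are routine. A secondary point worth a remark is that $\eta$ here is the victim's return under a perturbed policy mixing $\pi_{\mathcal{O}}$ with random actions; the argument, however, uses only that $\pi$ and $\pi_{\text{old}}$ are valid stochastic policies, so it is insensitive to that parameterization, and the budget constraint on $N$ merely shrinks the search space without affecting the monotonicity conclusion.
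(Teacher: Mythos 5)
Your proposal is correct and follows essentially the same route as the paper's proof: both rest on the TRPO-style bound $\eta(\pi) \le L_{\pi_{\text{old}}}(\pi) + C\cdot\max_s \mathrm{KL}\bigl(\pi_{\text{old}}(\cdot|s)\,\|\,\pi(\cdot|s)\bigr) = M(\pi)$, the observation that $M(\pi_{\text{old}}) = \eta(\pi_{\text{old}})$, and the chain $\eta(\pi) \le M(\pi) \le M(\pi_{\text{old}}) = \eta(\pi_{\text{old}})$ for the minimizer. The only difference is that you re-derive the bound from the performance-difference identity via a coupling argument and Pinsker's inequality (and correctly note that $C$ must be at least $4\epsilon\gamma/(1-\gamma)^2$ rather than an arbitrary positive coefficient, a point the paper leaves implicit), whereas the paper simply cites the TRPO inequality.
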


See Appendix~A.1 for the proof of Theorem~\ref{theorem:monotonicity}. Thus, the critical state identification problem is reduced to minimizing $M(\pi)$ under constraints on the number of perturbed states.


To efficiently handle the constraint $C_2 \leq N \leq C_1$, we reformulate the original constrained problem into an unconstrained dual form. Specifically, we first define the clipped surrogate objective:
\[
f(\pi) = \mathbb{E}_t \left[ \min \left( r_t A_t, \text{clip}(r_t, 1-\epsilon, 1+\epsilon) A_t \right) \right],
\]
where $r_t$ denotes the likelihood ratio between the new and old policies, and $A_t$ is the estimated advantage function.

\begin{theorem}[Optimization Reformulation]
\label{theorem:unconstrained}
The critical state identification problem can be reformulated as the following unconstrained optimization:
\[
\max_{\nu_1, \nu_2} \min_{\pi} f(\pi) + g_1(\nu_1) + g_2(\nu_2),
\]
where the penalty terms are defined as
\[
g_1(\nu_1) = \frac{\left[\max\left(\nu_1 + d_1(N - C_1), 0\right)\right]^2 - \nu_1^2}{2d_1}, \quad
g_2(\nu_2) = \frac{\left[\max\left(\nu_2 + d_2(C_2 - N), 0\right)\right]^2 - \nu_2^2}{2d_2},
\]
and $d_1$, $d_2 > 0$ are positive penalty coefficients.
\end{theorem}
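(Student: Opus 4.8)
The plan is to read $f(\pi)+g_1(\nu_1)+g_2(\nu_2)$ as a Powell--Hestenes--Rockafellar (PHR) augmented Lagrangian for the constrained program in Equation~\eqref{eq:problem} and to prove the equivalence by the classical augmented-Lagrangian duality argument. Write the box constraint $C_2\le N\le C_1$ as two inequalities $c_1(\pi):=N-C_1\le 0$ and $c_2(\pi):=C_2-N\le 0$. By Theorem~\ref{theorem:monotonicity}, decreasing $M(\pi)$ already forces $\eta(\pi)\le\eta(\pi_{\text{old}})$, so minimizing the victim return over the feasible set may be replaced by minimizing $M(\pi)$; the further substitution of $M(\pi)$ by the clipped surrogate $f(\pi)$ is the standard TRPO-to-PPO relaxation, in which the hard penalty $C\max_s\mathrm{KL}\big(\pi_{\text{old}}(\cdot|s)\,\|\,\pi(\cdot|s)\big)$ is enforced implicitly by clipping the likelihood ratio $r_t$ to $[1-\epsilon,1+\epsilon]$. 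Thus the problem to analyze is $\min_{\pi:\,C_2\le N\le C_1} f(\pi)$.

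First I would study the scalar penalty $\phi_d(\nu,c):=\tfrac{1}{2d}\big([\max(\nu+dc,0)]^2-\nu^2\big)$ for fixed $c$ and $d>0$. A short case analysis shows that if $c\le 0$ then $\phi_d(\nu,c)\le 0$ for every $\nu\ge 0$, with supremum $0$ attained at $\nu=0$, whereas if $c>0$ then $\phi_d(\nu,c)=\nu c+\tfrac{d c^2}{2}\to+\infty$ as $\nu\to+\infty$ (and the same conclusions hold if $\nu$ ranges over all of $\mathbb{R}$). Applying this to $g_1$ with $c=c_1(\pi)$ and to $g_2$ with $c=c_2(\pi)$ gives $\sup_{\nu_1,\nu_2}\big(g_1(\nu_1)+g_2(\nu_2)\big)=0$ when $\pi$ is feasible and $+\infty$ otherwise; that is, this supremum is exactly the $\{0,+\infty\}$-indicator of $\{C_2\le N\le C_1\}$. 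Hence
\[
\min_{\pi}\ \max_{\nu_1,\nu_2}\ \Big(f(\pi)+g_1(\nu_1)+g_2(\nu_2)\Big)\;=\;\min_{\pi:\,C_2\le N\le C_1} f(\pi),
\]
which is precisely the critical-state objective after the substitutions above.

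Next I would exchange the order of the two operators to reach the max--min form in the statement. Weak duality yields $\max_{\nu_1,\nu_2}\min_\pi(\cdot)\le\min_\pi\max_{\nu_1,\nu_2}(\cdot)$ for free, so only the reverse inequality (zero duality gap) remains. Here I would invoke the exactness of the PHR augmented Lagrangian: for penalty coefficients $d_1,d_2$ taken large enough, the augmented Lagrangian possesses a saddle point under a mild constraint qualification / second-order sufficient condition at the optimum, so $\min_\pi$ and $\max_{\nu_1,\nu_2}$ commute. Combining this with the displayed identity gives $\max_{\nu_1,\nu_2}\min_\pi f(\pi)+g_1(\nu_1)+g_2(\nu_2)=\min_{\pi:\,C_2\le N\le C_1} f(\pi)$, which is the asserted reformulation.

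The step I expect to be the main obstacle is the zero-duality-gap / saddle-point claim, because the perturbed policy $\pi$ is built from a discrete gate $\pi_{\mathcal{M}}(s)\in\{0,1\}$, so the feasible set is combinatorial, the program is nonconvex, and $N=\sum_s[\pi_{\mathcal{M}}(s)=0]$ is not a smooth function of $\pi$. I would address this by passing to the continuous relaxation actually optimized in practice, in which $\pi_{\mathcal{M}}(s)\in[0,1]$ is a Bernoulli gating probability and $N=\sum_s\Pr[\pi_{\mathcal{M}}(s)=0]$ is its expectation, so that $c_1,c_2$ become affine in the gating parameters and standard augmented-Lagrangian strong-duality results apply; the rounding back to a hard gate is then deferred to the algorithm. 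A minor bookkeeping point, which I would settle at the outset, is that $g_1,g_2$ are written as functions of $\nu_1,\nu_2$ only while silently depending on $\pi$ through $N$; I would make this explicit as $g_i(\nu_i;\pi)$ so that the inner $\min_\pi$ and outer $\max_{\nu_i}$ are unambiguous.
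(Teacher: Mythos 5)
Your proposal is correct in substance and reaches the same Powell--Hestenes--Rockafellar augmented-Lagrangian structure, but by a genuinely different route than the paper. The paper's proof makes the same preliminary reduction you do ($\eta \to M$ via Theorem~\ref{theorem:monotonicity}, then $M \to f$ via the PPO clipping approximation), but then introduces nonnegative slack variables $u_1, u_2$ to turn the two inequality constraints into equalities, forms the augmented Lagrangian with linear-plus-quadratic penalty terms, and eliminates the slacks by explicitly minimizing the convex quadratic $P(u)$ over $u \ge 0$; this elimination is precisely what produces the closed forms $g_1(\nu_1)$ and $g_2(\nu_2)$, and the result is read off directly in the $\max_{\nu}\min_{\pi}$ ordering as the dual problem $\max_{\nu_1,\nu_2 \ge 0}\Omega(\nu_1,\nu_2)$. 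You instead verify post hoc, by the scalar case analysis of $\phi_d(\nu,c)$, that the supremum over multipliers of the penalty terms is the $\{0,+\infty\}$ indicator of feasibility, which gives the $\min_{\pi}\max_{\nu}$ identity, and then you must swap the two operators, which you correctly identify as the real burden (nonconvex, combinatorial gate, $N$ nonsmooth) and propose to handle via the continuous Bernoulli relaxation and augmented-Lagrangian exactness. It is worth noting that the paper does not actually discharge this burden either: after eliminating the slacks it simply asserts that solving the constrained problem "is equivalent to" maximizing the dual, i.e., it implicitly assumes the zero-duality-gap/saddle-point property you flag explicitly. So your derivation is, if anything, more candid about where the equivalence can fail; what the paper's slack-variable route buys is a constructive explanation of where the specific formulas for $g_1, g_2$ come from and a form that directly matches the dual-ascent update used in the algorithm, while your indicator-function route is shorter on the formula side and isolates the duality question cleanly. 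Your closing bookkeeping remark that $g_1, g_2$ silently depend on $\pi$ through $N$ is also well taken and applies equally to the paper's statement.
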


This reformulation converts the original constrained optimization in Equation~(\ref{eq:problem}) into a dual form, where the upper and lower bounds on the number of perturbed states are softly enforced via quadratic penalties. This structure makes the problem amenable to standard unconstrained optimization techniques.
The proof of Theorem~\ref{theorem:unconstrained} is provided in Appendix~A.2.

The attacker optimizes the auxiliary policy $\pi_{\mathcal{M}}$ and dual variables $\nu_1, \nu_2$ jointly using proximal policy optimization(PPO) \cite{ppo} and gradient ascent.
\begin{algorithm}[h]
\caption{Training Algorithm of ARCS Framework}
\label{alg:framework}
\begin{algorithmic}[1]
\Require Victim policy $\pi_{\mathcal{O}}$, reward optimization rounds $N_{\text{reward}}$, candidate reward count $N_{\text{cand}}$, critical state update interval $K$
\Ensure Final adversarial attacker policy $\pi_{\mathcal{A}}$

\State Initialize structured prompt templates with static task information

\For{iteration $=1$ to $N_{\text{reward}}$}
    \State Generate $N_{\text{cand}}$ candidate rewards using the Reward Generator
    \ForAll{candidate rewards \textbf{in parallel}}
        \State Train attacker policy $\pi_{\mathcal{A}}$ for a few steps under each candidate reward
        \State Record training details (reward component changes, success rate trajectories)
    \EndFor
    \State Provide all candidates and their training details to the Reward Evaluator
    \State Select the best-performing reward based on evaluation
    \State Update the prompt for the Reward Generator using the best reward's training feedback
\EndFor

\State Finalize the selected adversarial reward $R$

\State Pre-train attacker policy $\pi_{\mathcal{A}}$ using $R$ via PPO 
\State Simultaneously train transition model $\tilde{P}$ and victim policy estimator $\tilde{\pi}_{\mathcal{O}}$ using supervised losses Equation(\ref{eq:Ppi})

\For{update $=1$ to fine-tuning steps}
    \If{update mod $K$ == 0}
        \State Re-optimize critical state identification policy $\pi_{\mathcal{M}}$ by Theorem \ref{theorem:unconstrained}
    \EndIf
    \State Update attacker policy $\pi_{\mathcal{A}}$ using PPO with $R_{\text{total}}$ 
    \State Simultaneously update $\tilde{P}$ and $\tilde{\pi}_{\mathcal{O}}$ using supervised losses Equation(\ref{eq:Ppi})

\EndFor

\end{algorithmic}
\end{algorithm}

\textbf{Policy Fine-tuning.}
After pre-training the attacker policy $\pi_{\mathcal{A}}$ with the selected adversarial reward $R$, we proceed to a fine-tuning stage aimed at further exploiting the vulnerabilities of the victim by focusing on critical states. To facilitate this process, we introduce two auxiliary models: a transition model $\tilde{P}$ and a victim policy estimator $\tilde{\pi}_{\mathcal{O}}$. Specifically, the transition model $\tilde{P}: O_o \times A_o \times A_a \rightarrow O_o$ predicts the victim's next observation given the current observation and both agents' actions, while the victim policy estimator $\tilde{\pi}_{\mathcal{O}}: O_o \rightarrow A_o$ predicts the victim’s next action based on its observation.
These models are trained using supervised objectives:
\begin{equation}
l_P = \sum_{t} \left\| \tilde{P}(O_o^t, A_o^t, A_a^t) - O_o^{t+1} \right\|_2, \quad
l_{\pi} = \sum_{t} \left\| \tilde{\pi}_{\mathcal{O}}(O_o^t) - A_o^t \right\|_2.
\label{eq:Ppi}
\end{equation}

To guide the attacker toward inducing impactful but minimally invasive perturbations, we define a fine-tuning reward $R_{ft}$ that encourages subtle changes in the victim’s observations while causing amplified deviations in the victim’s subsequent actions. Concretely, we introduce two deviation measures:
\begin{equation}
\Delta O_o = \left\| \tilde{P}(O_o^t, A_o^t, \pi_{\mathcal{A}}(O_a^t)) - O_o^{t+1} \right\|_2, \quad
\Delta A_o = \left\| \tilde{\pi}_{\mathcal{O}}(\tilde{P}(O_o^t, A_o^t, \pi_{\mathcal{A}}(O_a^t))) - A_o^{t+1} \right\|_2,
\end{equation}
which respectively quantify the discrepancy in the victim’s predicted future observation and action.
The fine-tuning reward is then constructed as:
\begin{equation}
R_{ft} = (-\Delta O_o + \Delta A_o) \cdot (1 - \pi_{\mathcal{M}}(O_o^{t+1})),
\end{equation}
where the negative $\Delta O_o$ term encourages the attacker to induce minimal changes in the victim’s observations, while the positive $\Delta A_o$ term promotes larger shifts in the victim’s resulting actions.  
This design incentivizes the attacker to subtly influence the shared environment in ways that cause significant behavioral deviations in the victim, thereby amplifying adversarial effectiveness.
The total reward used to update the attacker is then defined as:
\begin{equation}
R_{\text{total}} = R + \lambda R_{ft},
\end{equation}
where $\lambda$ controls the strength of the fine-tuning signal relative to the original adversarial objective. The attacker policy $\pi_{\mathcal{A}}$ is trained using the PPO algorithm with the composite reward $R_{\text{total}}$.
To maintain effective identification of critical states, the auxiliary policy $\pi_{\mathcal{M}}$ is periodically re-optimized every $K$ attacker updates by solving the constrained optimization problem in Theorem \ref{theorem:unconstrained}. The overall training procedure is summarized in Algorithm~\ref{alg:framework}.

\section{Evaluation}
\subsection{Experiment Design}
In this section, we conduct experiments in three MuJoCo environments: Sumo-Human, You-Shall-Not-Pass, and Kick-and-Defend, as well as three autonomous driving environments. A brief description of each environment is provided in Appendix~B. To evaluate the effectiveness of {ARCS}, we compare it against two representative baseline methods.
{Baseline1}\cite{gleave2020adversarial} trains adversarial agents by maximizing a sparse win/loss reward without adapting the attack objectives to different victim agents.
{Baseline2}\cite{guo2021adversarial} further enhances adversarial policy training by jointly maximizing the attacker’s reward and minimizing the victim’s reward, but still relies on a fixed surrogate objective shared across all victims.
In contrast, {ARCS} generates customized adversarial rewards tailored to each victim's specific vulnerabilities, enabling the attacker to adapt its strategy to different victim policies and achieve stronger, more targeted attacks.

To comprehensively analyze the contributions of adversarial rewards and critical state identification, we design a series of comparative experiments. We begin by evaluating an ablation variant, denoted as AR, which includes the adversarial reward optimization module but omits the critical state identification mechanism. Comparing AR with Baseline1 and Baseline2 allows us to assess the role of adaptive reward design in enhancing adversarial policy training.
Building on this setup, we apply critical state-based fine-tuning to the AR variant to form the complete ARCS framework. We then compare the performance of ARCS against its pre-finetuned version (AR), as well as the two baselines, both with and without fine-tuning. This allows us to quantify the effectiveness of the critical state identification module in improving adversarial performance when combined with reward optimization.
Finally, to directly validate the effectiveness of the critical state identification module, we perform a perturbation-based analysis across multiple environments. This involves comparing victim failure rates under no perturbation, randomly applied perturbations, and targeted perturbations restricted to identified critical states. This validation highlights the module's ability to locate states that have a disproportionately large impact on the victim’s performance.
Detailed training procedures, hyperparameter settings, and model configurations are provided in Appendix~E.




\subsection{Experiment Results}
\subsubsection{Effectiveness of Learned Adversarial Rewards}

We next provide a detailed view of how our framework generates adaptive adversarial rewards in practice. Guided by structured prompts that incorporate task descriptions, environment variables, and prior training feedback (see Appendix~C), GPT-4o is used as both the Reward Generator and the Evaluator. In each iteration, it produces four candidate reward functions, which are used to train attacker agents independently. Based on training outcomes, the Evaluator selects the most effective candidate to guide the next generation. Notably, the process converges within just four rounds, requiring only 32 API calls to obtain effective task-adaptive rewards. 

The final adversarial reward for Sumo-Human integrates dense shaping terms, sparse interaction signals, and terminal bonuses. The dense reward combines a standing score, determined by torso height, uprightness, and movement stability, with a combat score that captures ring control, opponent destabilization, and physical advantage. Their weights are adjusted dynamically according to the attacker's win rate, emphasizing stability in early training and aggression as performance improves. Additional terms include sparse rewards based on the relative advantage between agents, penalties for energy usage and long episodes, and outcome-based bonuses. This structure allows the attacker to learn progressively aggressive and effective behaviors.
Complete reward functions for all environments are provided in Appendix~D.

To evaluate the effectiveness of the generated rewards, we compare the AR against Baseline1 and Baseline2 across three MuJoCo environments and three autonomous driving environments.  The results are shown in Figure~\ref{fig:adversarial_training}.

\begin{figure}[h]
    \centering
    \begin{subfigure}[b]{0.3\textwidth}
        \includegraphics[width=\linewidth]{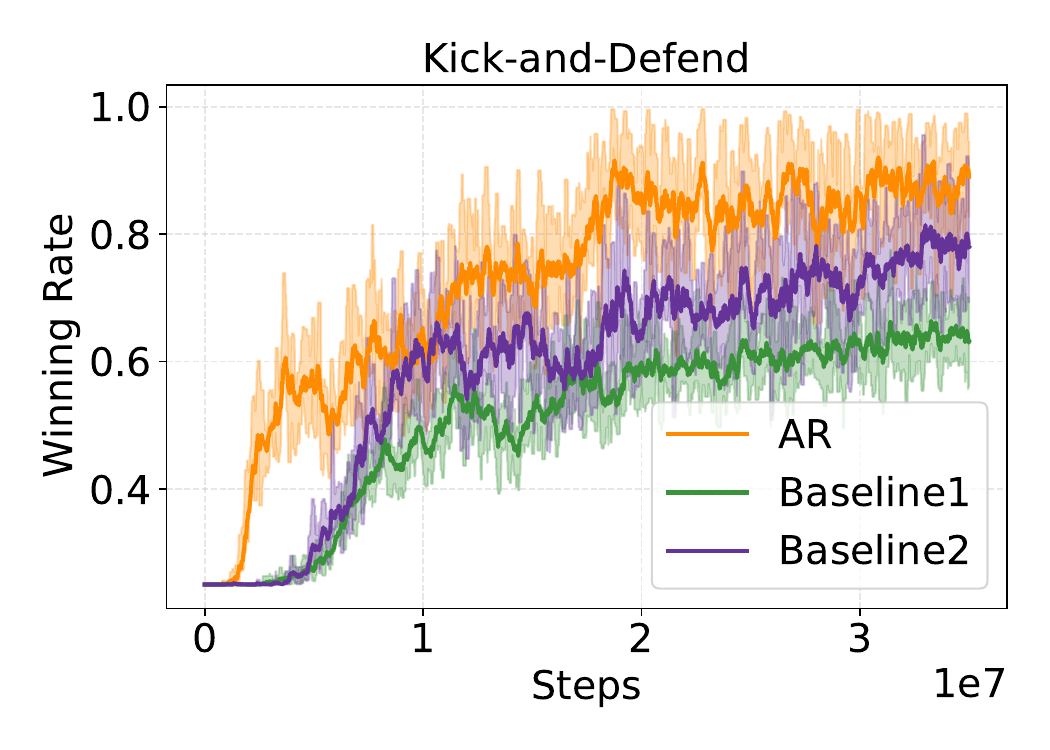}
        \caption{Kick-and-Defend}
        \label{fig:example3}
    \end{subfigure}
    \hspace{0.01\linewidth}
    \begin{subfigure}[b]{0.3\textwidth}
        \includegraphics[width=\linewidth]{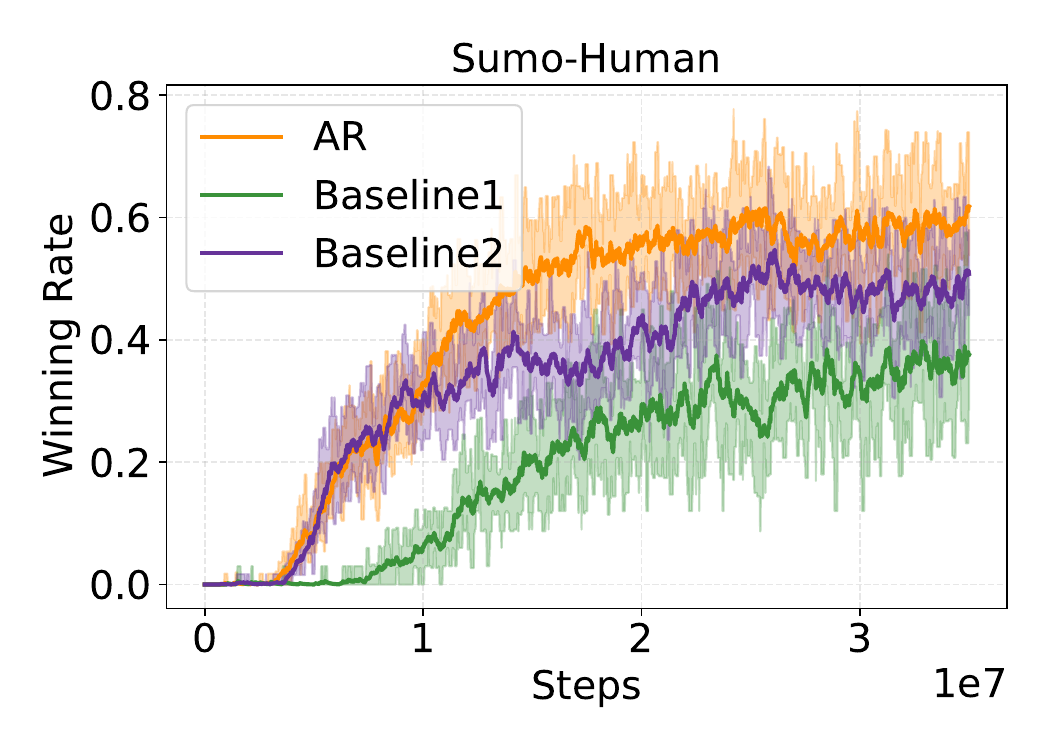}
        \caption{Sumo-Human}
        \label{fig:example4}
    \end{subfigure}
    \hspace{0.01\linewidth}
    \begin{subfigure}[b]{0.3\textwidth}
        \includegraphics[width=\linewidth]{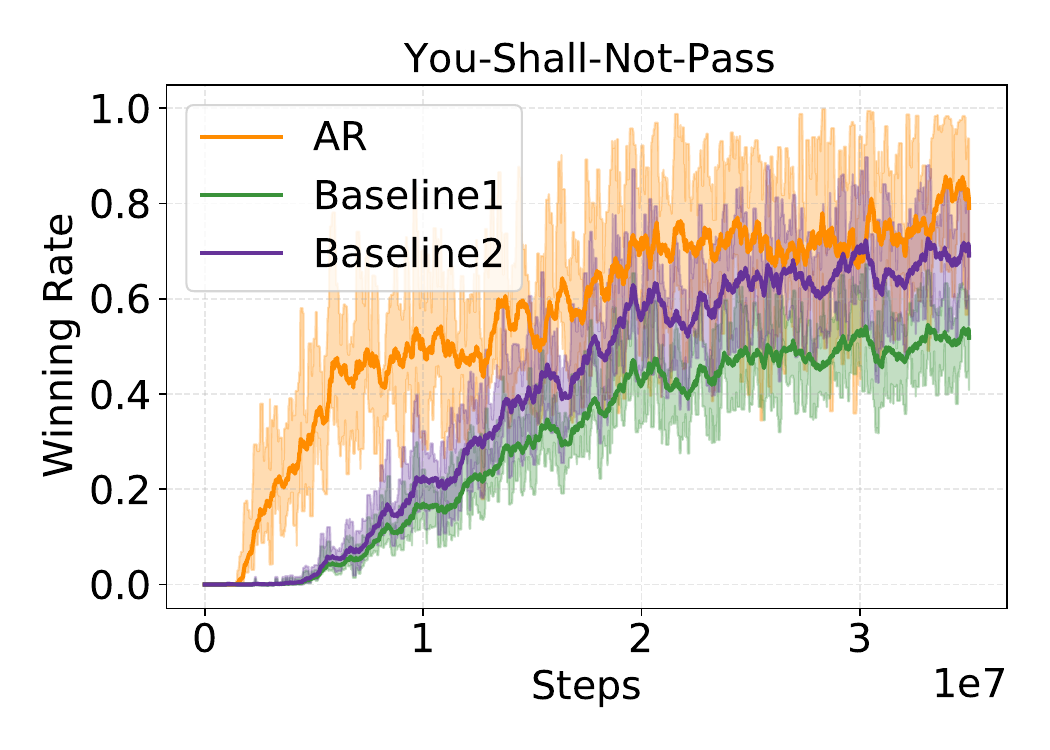}
        \caption{You-Shall-Not-Pass}
        \label{fig:example5}
    \end{subfigure}

    \vspace{0.01\linewidth}  

    \begin{subfigure}[b]{0.3\textwidth}
        \includegraphics[width=\linewidth]{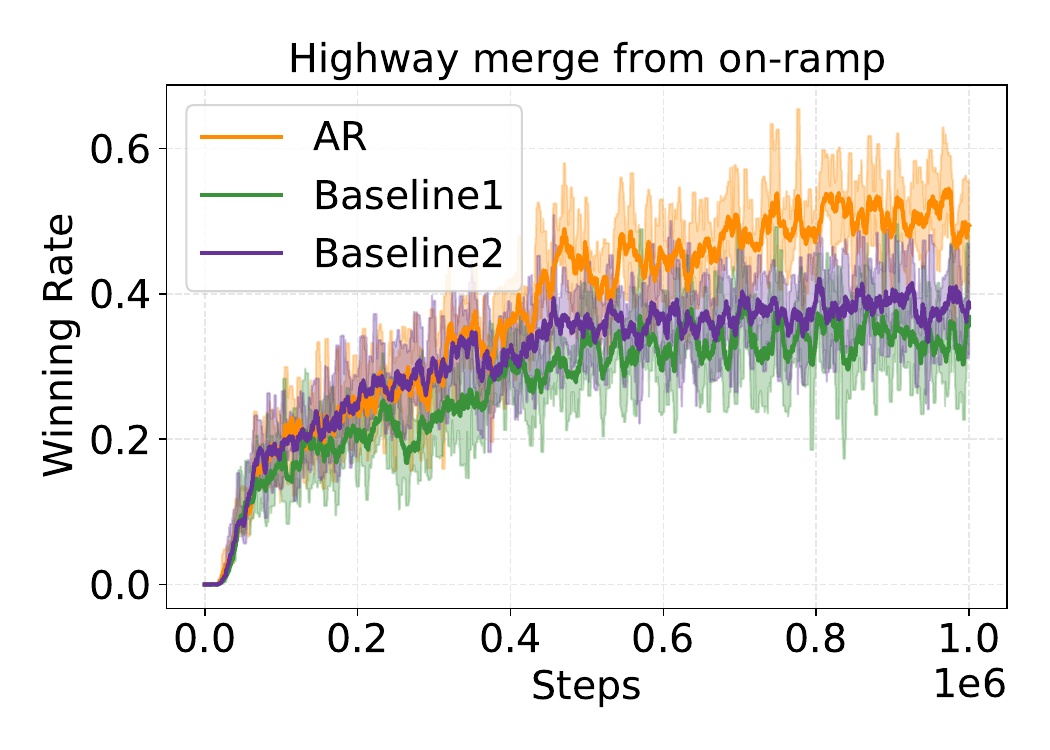}
        \caption{Highway merge from on-ramp}
        \label{fig:example6}
    \end{subfigure}
    \hspace{0.01\linewidth}
    \begin{subfigure}[b]{0.3\textwidth}
        \includegraphics[width=\linewidth]{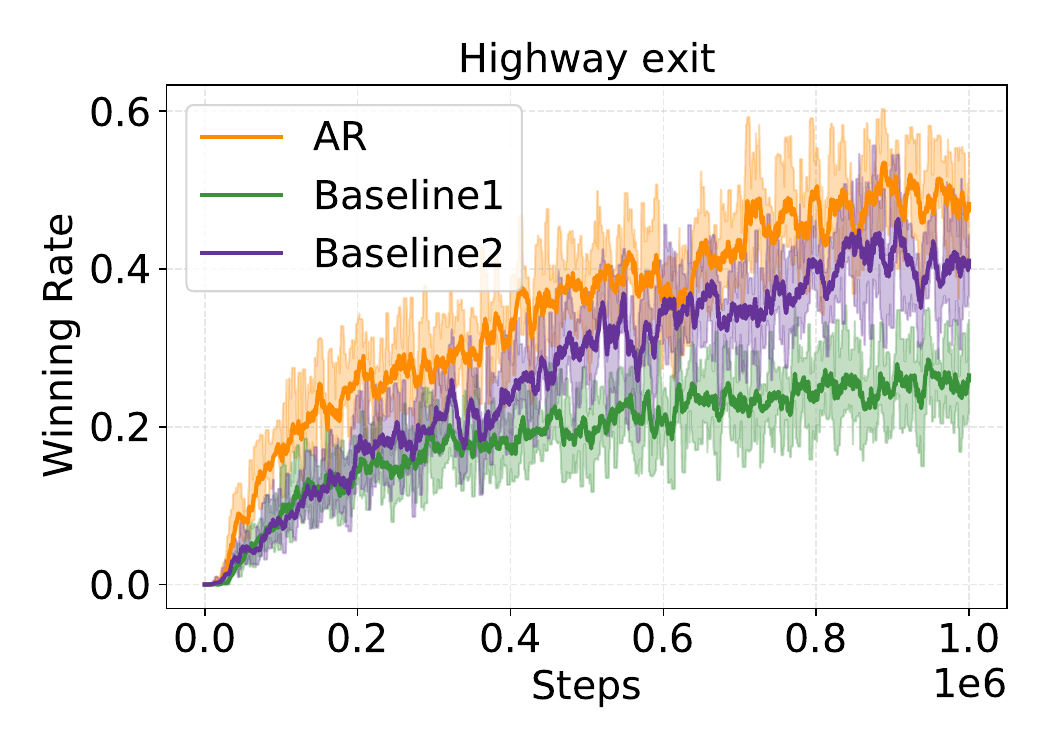}
        \caption{Highway exit}
        \label{fig:example7}
    \end{subfigure}
    \hspace{0.01\linewidth}
    \begin{subfigure}[b]{0.3\textwidth}
        \includegraphics[width=\linewidth]{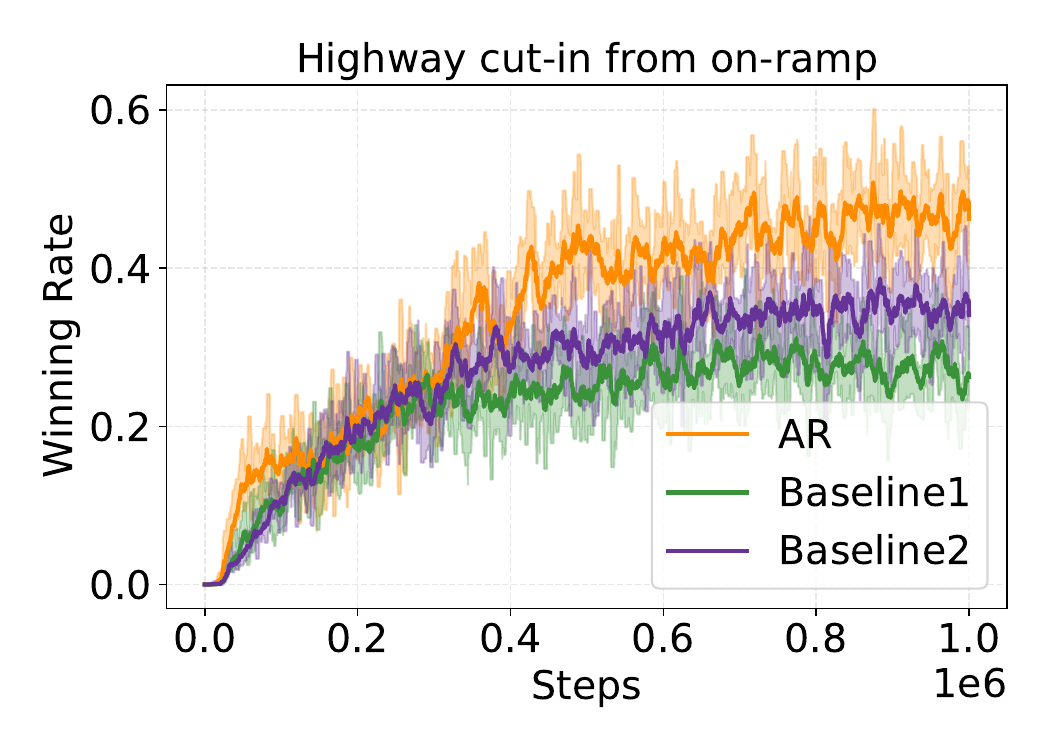}
        \caption{Highway cut-in from on-ramp}
        \label{fig:example8}
    \end{subfigure}

    \caption{Comparison of attack success rates between AR and baselines across six environments.}
    \label{fig:adversarial_training}
\end{figure}

As shown in the results, AR consistently outperforms both baselines across all environments. This improvement comes from the use of adaptive adversarial rewards that are not only aligned with the task environment, but more importantly, tailored to exploit the specific weaknesses of the victim policy. Unlike fixed or generic reward designs, our method provides targeted guidance that enables the attacker to interfere more effectively. These results demonstrate that our reward optimization framework leads to stronger and more adaptable attacks, even without access to the victim’s internal information.

\subsubsection{Effectiveness of Critical State Fine-tuning}

To evaluate the complete ARCS framework, we apply fine-tuning to the attacker policy using the identified critical states as guidance. Specifically, the attacker is first trained with the optimized adversarial reward, and then further fine-tuned with critical state, as described in Section~\ref{sec:method}. This two-stage process enables the attacker to refine its adversarial strategy by focusing on high-impact decision points. We compare attack success rates before and after fine-tuning across six environments, and analyze the results below. The hyperparameters used in reward fine-tuning and critical state selection are listed in Appendix~E.

\begin{table}[ht]
\centering
\caption{Attack success rates across six environments. Columns represent different methods with or without CS. ARCS corresponds to AR with CS.}
\label{tab:finetune_results}
\renewcommand{\arraystretch}{1.1}
\setlength{\tabcolsep}{3pt}
\begin{tabular}{ccccccc}
\toprule
\multirow{2}{*}{\textbf{Environment}} & \multicolumn{6}{c}{\textbf{Method}} \\
\cmidrule{2-7}
& Baseline1 & Baseline1CS & Baseline2 & Baseline2CS & AR & \textbf{ARCS} \\
\midrule
Sumo-Human         & 0.37 & 0.43 & 0.51 & 0.59 & 0.60 & \textbf{0.65} \\
You-Shall-Not-Pass & 0.51 & 0.57 & 0.74 & 0.79 & 0.80 & \textbf{0.85} \\
Kick-and-Defend    & 0.68 & 0.75 & 0.80 & 0.86 & 0.87 & \textbf{0.91} \\
Highway merge from on-ramp        & 0.33 & 0.37 & 0.38 & 0.44 & 0.53 & \textbf{0.60} \\
Highway exit     & 0.28 & 0.33 & 0.41 & 0.45 & 0.51 & \textbf{0.55} \\
Highway cut-in from on-ramp         & 0.27 & 0.31 & 0.33 & 0.38 & 0.51 & \textbf{0.54} \\
\bottomrule
\end{tabular}
\end{table}


As shown in Table~\ref{tab:finetune_results}, critical state fine-tuning improves the performance of all methods across all environments. This confirms that identifying and leveraging high-impact decision states is a broadly effective enhancement, regardless of the underlying reward structure. By guiding the attacker to focus on the most decisive moments, this mechanism enables more precise and efficient adaptation.

Among all methods, ARCS consistently achieves the highest post-finetuning success rates. Although its improvement margin is smaller due to a stronger starting point, this highlights the strength of our reward optimization framework. 
Combined with critical state-based training, it yields highly effective adversarial strategies. These results highlight the importance of integrating adaptive reward optimization with state-aware policy refinement in adversarial reinforcement learning.




\subsubsection{Perturbation-based Validation of Critical States}

To comprehensively validate the effectiveness of the critical state identification module, we conducted perturbation experiments across six environments. In each environment, we compared victim failure rates under three conditions: (1) no perturbation, where the victim followed its original policy throughout the episode; (2) random perturbation, where an equal number of randomly selected states were perturbed; and (3) critical state perturbation, where perturbations were applied only at states identified by our critical state selector.
In our setup, a perturbation means forcing the victim to take a random action at a specific state, thereby disrupting its original decision-making process. This enables us to test whether the states selected by our method indeed correspond to moments where the victim is most vulnerable to disruption.
\begin{table}[h]
  \centering
  \caption{Victim failure rates under different perturbation conditions across six environments. Critical state perturbation leads to consistently higher failure rates.}
  \label{tab:critical_state_full_results}
  \renewcommand{\arraystretch}{1.1}
  \setlength{\tabcolsep}{8pt}
  \begin{tabular}{cccc}
    \toprule
     \textbf{Environment} & \makecell{No \\ Perturbation} & \makecell{Random \\ Perturbation} & \makecell{\textbf{Critical State} \\ \textbf{Perturbation}} \\
    \midrule
    Sumo-Human          & 0.65 & 0.68 & \textbf{0.88} \\
    You-Shall-Not-Pass  & 0.52 & 0.57 & \textbf{0.79} \\
    Kick-and-Defend     & 0.48 & 0.51 & \textbf{0.83} \\
    Highway merge from on-ramp          & 0.43 & 0.47 & \textbf{0.72} \\
    Highway exit       & 0.39 & 0.45 & \textbf{0.70} \\
    Highway cut-in from on-ramp          & 0.41 & 0.46 & \textbf{0.68} \\
    \bottomrule
  \end{tabular}
\end{table}
As shown in Table~\ref{tab:critical_state_full_results}, perturbing the identified critical states consistently led to significantly higher victim failure rates compared to both the random perturbation and no-perturbation settings. For instance, in the \textit{Sumo-Human} environment, the failure rate increased from 0.65 (no perturbation) and 0.68 (random perturbation) to 0.88 under critical state perturbation. Similar trends were observed across all six environments, confirming that the identified critical states indeed correspond to pivotal decision-making points. These results show that our critical state identification module effectively locates high-impact decision points, where minimal intervention leads to maximal disruption.

\section{Conclusion}

In this paper, we presented ARCS, an adaptive adversarial policy training framework combining adversarial reward optimization with critical state identification. Unlike existing methods relying on static objectives or direct environment manipulation, ARCS employs large language models to adaptively generate rewards targeting specific vulnerabilities and strategically identifies critical states to enhance attack effectiveness. Experiments across multiple MuJoCo environments and autonomous driving environments validate the framework’s effectiveness, demonstrating clear improvements over baseline methods. Future work will explore extending ARCS to more complex multi-agent scenarios and integrating advanced techniques such as meta-learning and multi-task learning for enhanced robustness and adaptability.


\bibliographystyle{unsrt}  
\bibliography{reference}

\newpage
\appendix
\setcounter{theorem}{0}
\section{Proof of Theorems}
\label{appendix:proofs}

In this appendix, we provide detailed proofs for the theoretical results presented in Section~3. These proofs establish that minimizing the surrogate objective ensures the non-increasing behavior of the victim's expected cumulative reward, and demonstrate the equivalence between the constrained and unconstrained optimization formulations.

\subsection{Proof of Theorem 1}

\begin{theorem}
Minimizing $M(\pi)$ guarantees that the expected cumulative reward does not increase, that is,
\[
\eta(\pi) \leq \eta(\pi_{\text{old}}).
\]
\end{theorem}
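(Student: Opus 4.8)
\subsection*{Proof Proposal for Theorem 1}

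The plan is to adapt the monotonic improvement argument from Trust Region Policy Optimization to the minimization setting. The starting point is the exact policy-difference identity
\[
\eta(\pi) = \eta(\pi_{\text{old}}) + \sum_s \rho_{\pi}(s) \sum_a \pi(a|s) A_{\pi_{\text{old}}}(s,a),
\]
which holds for any two policies and follows from telescoping the advantage along trajectories generated by $\pi$. The surrogate $L_{\pi_{\text{old}}}(\pi)$ is obtained by replacing the visitation frequency $\rho_{\pi}$ by $\rho_{\pi_{\text{old}}}$, so the entire difference $\eta(\pi) - L_{\pi_{\text{old}}}(\pi)$ is attributable to the mismatch between $\rho_\pi$ and $\rho_{\pi_{\text{old}}}$.

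The key step is to control this mismatch. First I would bound the per-state total variation distance between $\pi$ and $\pi_{\text{old}}$ by their KL divergence via Pinsker's inequality, $D_{TV}(\pi_{\text{old}}(\cdot|s), \pi(\cdot|s)) \le \sqrt{\tfrac{1}{2}\mathrm{KL}(\pi_{\text{old}}(\cdot|s)\parallel\pi(\cdot|s))}$. Then, using a coupling argument over trajectories, the discrepancy between the discounted state distributions is controlled by $\max_s D_{TV}$, and combining with a uniform bound $\epsilon = \max_s |\mathbb{E}_{a\sim\pi}[A_{\pi_{\text{old}}}(s,a)]|$ on the advantage magnitude yields
\[
\bigl| \eta(\pi) - L_{\pi_{\text{old}}}(\pi) \bigr| \le C \cdot \max_s \mathrm{KL}\bigl(\pi_{\text{old}}(\cdot|s)\parallel\pi(\cdot|s)\bigr)
\]
for an appropriate constant $C>0$ depending on $\epsilon$ and the discount factor $\gamma$. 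In particular the one-sided inequality $\eta(\pi) \le L_{\pi_{\text{old}}}(\pi) + C\cdot\max_s \mathrm{KL}(\pi_{\text{old}}(\cdot|s)\parallel\pi(\cdot|s)) = M(\pi)$ holds.

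To conclude, I would observe that evaluating the surrogate at $\pi = \pi_{\text{old}}$ gives $L_{\pi_{\text{old}}}(\pi_{\text{old}}) = \eta(\pi_{\text{old}})$ (the advantage term vanishes) and the KL term vanishes as well, so $M(\pi_{\text{old}}) = \eta(\pi_{\text{old}})$. Hence if $\pi$ is obtained by minimizing $M$, then $M(\pi) \le M(\pi_{\text{old}}) = \eta(\pi_{\text{old}})$, and chaining with $\eta(\pi) \le M(\pi)$ from the previous paragraph yields $\eta(\pi) \le \eta(\pi_{\text{old}})$, as claimed.

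I expect the main obstacle to be the rigorous derivation of the approximation bound $|\eta(\pi) - L_{\pi_{\text{old}}}(\pi)| \le C \cdot \max_s \mathrm{KL}(\cdots)$: making the coupling/trajectory argument precise, tracking how the discounted-visitation mismatch accumulates over time steps, and pinning down the constant $C$ in terms of $\gamma$ and the advantage bound. Everything else (the telescoping identity, Pinsker, and the final chaining) is routine once that estimate is in hand.
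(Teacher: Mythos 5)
Your proposal takes essentially the same approach as the paper's proof: both rely on the TRPO approximation bound $|\eta(\pi)-L_{\pi_{\text{old}}}(\pi)| \le C \cdot \max_s \mathrm{KL}(\pi_{\text{old}}(\cdot|s)\parallel\pi(\cdot|s))$, the observation that $M(\pi_{\text{old}})=\eta(\pi_{\text{old}})$, and the chain $\eta(\pi)\le M(\pi)\le M(\pi_{\text{old}})=\eta(\pi_{\text{old}})$. The only difference is that the paper simply cites this bound from the TRPO framework, whereas you sketch re-deriving it via Pinsker's inequality and a coupling argument, which is the standard derivation and correct.
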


\begin{proof}
We first introduce the following key inequality from the trust region policy optimization (TRPO) framework~\cite{trpo}:

\begin{equation}
|\eta(\pi) - L_{\pi_{\text{old}}}(\pi)| \leq C \cdot \max_s \text{KL}(\pi_{\text{old}}(\cdot|s) || \pi(\cdot|s)),
\label{eq:trpo_bound}
\end{equation}
where $C > 0$ is a constant depending on the reward scale and discount factor.

From this, we directly obtain:
\begin{equation}
\eta(\pi) \leq L_{\pi_{\text{old}}}(\pi) + C \cdot \max_s \text{KL}(\pi_{\text{old}}(\cdot|s) || \pi(\cdot|s)) = M(\pi).
\label{eq:upperbound}
\end{equation}

Meanwhile, observe that at $\pi = \pi_{\text{old}}$, we have:
\[
L_{\pi_{\text{old}}}(\pi_{\text{old}}) = \eta(\pi_{\text{old}}),
\quad
\max_s \text{KL}(\pi_{\text{old}}(\cdot|s) || \pi_{\text{old}}(\cdot|s)) = 0,
\]
thus
\[
M(\pi_{\text{old}}) = \eta(\pi_{\text{old}}).
\]

Because $M(\pi)$ is minimized over $\pi$, we have:
\begin{equation}
M(\pi) \leq M(\pi_{\text{old}}) = \eta(\pi_{\text{old}}).
\label{eq:M_bound}
\end{equation}

Combining inequalities~\eqref{eq:upperbound} and~\eqref{eq:M_bound}, we conclude:
\[
\eta(\pi) \leq M(\pi) \leq \eta(\pi_{\text{old}}),
\]
which completes the proof.
\end{proof}

\subsection{Proof of Theorem 2}

\begin{theorem}
The constrained optimization problem
\[
\min_{\pi} \eta(\pi) \quad \text{subject to} \quad C_2 \leq N \leq C_1
\]
is equivalent to solving the unconstrained optimization
\[
\max_{\nu_1, \nu_2} \min_{\pi} f(\pi) + g_1(\nu_1) + g_2(\nu_2),
\]
where
\[
f(\pi) = \mathbb{E}_t \left[ \min \left( r_t A_t, \, \text{clip}(r_t, 1-\epsilon, 1+\epsilon) A_t \right) \right],
\]
\[
g_1(\nu_1) = \frac{[\max(\nu_1 + d_1(N - C_1), 0)]^2 - \nu_1^2}{2d_1},
\quad
g_2(\nu_2) = \frac{[\max(\nu_2 + d_2(C_2 - N), 0)]^2 - \nu_2^2}{2d_2}.
\]
\end{theorem}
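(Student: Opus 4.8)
The plan is to split the reformulation into two stages. In the first stage I would justify replacing the intractable objective $\eta(\pi)$ by the clipped surrogate $f(\pi)$: by Theorem~\ref{theorem:monotonicity}, decreasing the trust-region surrogate $M(\pi)$ already forces $\eta(\pi)\le\eta(\pi_{\text{old}})$, and $f(\pi)$ is the standard first-order, clipping-based approximation of that penalized trust-region step (the clip on $r_t$ playing the role of the max-KL penalty in $M$), so that minimizing $f$ produces the same monotone-descent behaviour on $\eta$ and the surrogate problem $\min_\pi f(\pi)$ subject to $C_2\le N\le C_1$ shares the relevant stationary points with the original one. The substantive content is the second stage: recognizing $g_1$ and $g_2$ as the Powell--Hestenes--Rockafellar (PHR) augmented-Lagrangian penalties associated with the two inequality constraints $N-C_1\le 0$ and $C_2-N\le 0$, and showing that the saddle-point formulation recovers the constrained problem.

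For the second stage, let $c_1(\pi):=N-C_1$ and $c_2(\pi):=C_2-N$, so feasibility is $c_1(\pi)\le 0$ and $c_2(\pi)\le 0$, and set
\[
\mathcal{L}(\pi,\nu_1,\nu_2)=f(\pi)+g_1(\nu_1)+g_2(\nu_2),\qquad g_i(\nu_i)=\frac{\bigl[\max\bigl(\nu_i+d_i c_i(\pi),0\bigr)\bigr]^2-\nu_i^2}{2d_i}.
\]
The key lemma I would prove is a one-dimensional computation: for fixed $\pi$ and each $i$, a short case split on the sign of $\nu_i+d_i c_i$ shows that $\nu_i\mapsto g_i(\nu_i)$ is bounded above exactly when $c_i(\pi)\le 0$, with $\sup_{\nu_i} g_i(\nu_i)=0$ attained at $\nu_i=0$, and diverges to $+\infty$ when $c_i(\pi)>0$; the same computation shows the supremum over $\nu_i\in\mathbb{R}$ equals that over $\nu_i\ge 0$, which is precisely why the statement may drop the sign constraints on $\nu_1,\nu_2$. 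Consequently $\max_{\nu_1,\nu_2}\mathcal{L}(\pi,\nu_1,\nu_2)$ equals $f(\pi)$ on the feasible set $\{C_2\le N\le C_1\}$ and $+\infty$ off it, hence $\min_\pi\max_{\nu_1,\nu_2}\mathcal{L}=\min\{f(\pi):C_2\le N\le C_1\}$, which is exactly the (surrogate) constrained problem.

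It then remains to exchange the order to obtain the stated $\max_{\nu_1,\nu_2}\min_\pi\mathcal{L}$. Weak duality gives $\max_{\nu_1,\nu_2}\min_\pi\mathcal{L}\le\min_\pi\max_{\nu_1,\nu_2}\mathcal{L}$ unconditionally; for the reverse inequality I would invoke the standard augmented-Lagrangian duality result (Rockafellar; Bertsekas, \emph{Constrained Optimization and Lagrange Multiplier Methods}): the quadratic augmentation is exact for penalty coefficients $d_1,d_2$ taken large enough relative to the local curvature of $f$ near a constrained stationary point, producing a local saddle point $(\pi^\star,\nu_1^\star,\nu_2^\star)$ of $\mathcal{L}$ at which primal and dual values coincide. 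Combining this with the first-stage surrogate identification completes the chain from the constrained problem in~\eqref{eq:problem} to the min--max form in the statement, and also explains why the joint PPO-plus-gradient-ascent update in Algorithm~\ref{alg:framework} targets the right object.

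The main obstacle I anticipate is exactly this min--max exchange: $\min_\pi f(\pi)$ is nonconvex (nonlinear policy parametrization together with the clipped objective), so strong duality is not automatic and a global saddle point cannot be expected. I would therefore state the equivalence at the level of KKT / saddle points — the PHR augmented Lagrangian is designed precisely so that, for sufficiently large $d_i$, a constrained local minimizer of $f$ together with the induced multipliers is a local saddle point of $\mathcal{L}$, and so that primal--dual first-order methods converge to such points — rather than claiming equality of global optima. A secondary, more routine point is that $N$ is itself policy-dependent (it counts the states perturbed under $\pi_{\mathcal{M}}$), so I would first fix the precise functional form of $N(\pi)$ (e.g. its expectation over trajectories) to make $g_1,g_2$ well-defined and differentiable away from their kink sets before any gradient-based argument.
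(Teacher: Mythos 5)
Your proposal is correct and arrives at the same Powell--Hestenes--Rockafellar (PHR) augmented-Lagrangian structure as the paper, but the second stage runs in the opposite direction from the paper's derivation. The paper makes the same first move you do (replace $\eta$ by the surrogate $M$ via Theorem~1, then approximate $M$ by the clipped objective $f$), but it then introduces nonnegative slack variables $u_1,u_2$ to convert the two inequality constraints into equalities, writes the augmented Lagrangian with linear multiplier terms plus quadratic penalties, and analytically minimizes over $u\ge 0$ (using convexity of the slack-dependent part $P(u)$) to \emph{derive} the closed forms $g_1(\nu_1)$ and $g_2(\nu_2)$; the max--min statement is then asserted as the dual of that construction. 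You instead take $g_1,g_2$ as given and prove the complementary one-dimensional fact that $\sup_{\nu_i} g_i$ equals $0$ when $c_i(\pi)\le 0$ and $+\infty$ otherwise, which shows $\min_\pi\max_{\nu}\mathcal{L}$ reproduces the constrained surrogate problem, and you then treat the order exchange explicitly via weak duality plus local exactness of the PHR augmentation for sufficiently large $d_i$. That last step is where your write-up is more careful than the paper: the paper identifies the dual $\max_\nu\min_\pi$ with the constrained primal without addressing the nonconvex duality gap, whereas you correctly scope the equivalence to KKT/local saddle points; your remark that $N(\pi)$ needs a precise, differentiable definition is likewise left implicit in the paper. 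What the paper's slack-variable route buys is a constructive explanation of where the specific quadratic forms of $g_1,g_2$ come from; what your route buys is a direct verification that the stated min--max problem encodes the constraints, together with an honest account of the duality step.
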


\begin{proof}
We start from the original objective:
\[
\min_{\pi} \eta(\pi) \quad \text{subject to} \quad C_2 \leq N \leq C_1.
\]

Directly minimizing $\eta(\pi)$ can be challenging due to instability caused by large policy updates. According to Theorem~1, minimizing the surrogate objective
\[
M(\pi) = L_{\pi_{\text{old}}}(\pi) + C \cdot \max_s \text{KL}(\pi_{\text{old}}(\cdot|s) \parallel \pi(\cdot|s))
\]
guarantees that $\eta(\pi)$ does not increase, thus providing a stable surrogate for optimization.

Therefore, the constrained problem is equivalently transformed into:
\[
\min_{\pi} M(\pi) \quad \text{subject to} \quad C_2 \leq N \leq C_1.
\]

Next, based on the TRPO theory~\cite{trpo} and PPO approximations~\cite{ppo}, the surrogate objective $M(\pi)$ can be further approximated by the clipped objective:
\[
f(\pi) = \mathbb{E}_t \left[ \min\left(r_t A_t, \, \text{clip}(r_t, 1-\epsilon, 1+\epsilon) A_t \right) \right].
\]
Thus, we rewrite the constrained optimization as:

\[
\begin{aligned}
\min_{\pi, u_1, u_2} \quad & f(\pi) \\
\text{subject to} \quad 
& N(\pi) - C_1 + u_1 = 0, \\
& C_2 - N(\pi) + u_2 = 0, \\
& u_1 \geq 0,\quad u_2 \geq 0.
\end{aligned}
\]
where $u_1$ and $u_2$ are non-negative slack variables that allow soft constraint handling.

The corresponding augmented Lagrangian is:
\[
\begin{aligned}
\mathcal{L}(\pi, u_1, u_2, \nu_1, \nu_2) =\; & f(\pi) \\
& + \nu_1 \big(N(\pi) - C_1 + u_1\big) + \frac{d_1}{2} \big(N(\pi) - C_1 + u_1\big)^2 \\
& + \nu_2 \big(C_2 - N(\pi) + u_2\big) + \frac{d_2}{2} \big(C_2 - N(\pi) + u_2\big)^2
\end{aligned}
\]
where $\nu_1, \nu_2 \geq 0$ are dual variables.

Grouping terms related to $u_1$ and $u_2$, define:
\[
P(u) = \nu (h(N(\pi)) + u) + \frac{d}{2}(h(N(\pi)) + u)^2,
\]
where $h(N(\pi))$ represents the constraint violation term ($h(N(\pi)) = N(\pi) - C_1$ for $u_1$ and $h(N(\pi)) = C_2 - N(\pi)$ for $u_2$).

Since $P(u)$ is convex in $u$, we can compute the minimum over $u \geq 0$ explicitly:
\[
\min_{u \geq 0} P(u) = 
\begin{cases}
-\frac{\nu^2}{2d}, & \text{if } -\frac{\nu}{d} - h(N(\pi)) \geq 0, \\
\frac{(\nu + d h(N(\pi)))^2 - \nu^2}{2d}, & \text{otherwise}.
\end{cases}
\]
which can be compactly written as:
\[
\min_{u \geq 0} P(u) = \frac{\left[\max\left(\nu + d h(N(\pi)), 0\right)\right]^2 - \nu^2}{2d}.
\]

Applying this result separately to $u_1$ and $u_2$, we obtain:
\[
\begin{aligned}
\min_{u_1 \geq 0} \quad & \nu_1 (N(\pi) - C_1 + u_1) + \frac{d_1}{2}(N(\pi) - C_1 + u_1)^2
= \frac{\left[\max\left(\nu_1 + d_1(N(\pi) - C_1), 0\right)\right]^2 - \nu_1^2}{2d_1}, \\
\min_{u_2 \geq 0} \quad & \nu_2 (C_2 - N(\pi) + u_2) + \frac{d_2}{2}(C_2 - N(\pi) + u_2)^2
= \frac{\left[\max\left(\nu_2 + d_2(C_2 - N(\pi)), 0\right)\right]^2 - \nu_2^2}{2d_2}.
\end{aligned}
\]

Thus, minimizing over $u_1$ and $u_2$ yields the following dual function:
\[
\Omega(\nu_1, \nu_2) = \min_{\pi} f(\pi) + g_1(\nu_1) + g_2(\nu_2),
\]
where $g_1(\nu_1)$ and $g_2(\nu_2)$ are as defined in the theorem.

Finally, solving the original constrained problem is equivalent to solving the following unconstrained dual optimization:
\[
\max_{\nu_1 \geq 0, \nu_2 \geq 0} \Omega(\nu_1, \nu_2).
\]
\end{proof}

\section{Environment Descriptions}

\textbf{Sumo-Human.} A symmetric multi-agent environment where two humanoid agents engage in a physical contest within a circular arena. The objective is to remain inside the arena while attempting to force the opponent out or cause them to fall.

\textbf{Kick-and-Defend.} An asymmetric task involving two agents: one attempts to kick a ball into a goal, while the other acts as a goalkeeper attempting to block the shot. The agents operate within a confined rectangular field.

\textbf{You-Shall-Not-Pass.} A competitive environment where one agent attempts to advance forward across a designated boundary, while the opposing agent attempts to prevent passage through physical obstruction.

\textbf{Highway merge from on-ramp.}
An adversarial vehicle attempts to interfere with the ego-vehicle merging from a highway on-ramp by blocking the gap or forcing it to slow down, with the aim of causing a failed merge or unsafe maneuver.

\textbf{Highway cut-in from on-ramp.}
An adversarial vehicle cuts into the ego-vehicle’s lane from the on-ramp at a critical moment, aiming to force the ego-vehicle into sudden braking or lateral deviation, increasing the risk of failure.

\textbf{Highway exit.}
As the ego-vehicle prepares to exit the highway, an adversarial vehicle deliberately obstructs the exit path or prevents safe lane changes, attempting to make the ego-vehicle miss the off-ramp or perform unsafe actions.

\section{LLMs Prompts Example}
\subsection{Reward Generator Prompts Example(Sumo-Human)}

When responding to my questions, provide only the necessary code that directly answers them, without adding any additional text before or after the code. Your primary objective is to optimize the RL model by improving its {winning rate} and ensuring that the agent actively defeats its opponent within the competition time.

The agent must first learn to stand properly before earning rewards by performing strategic actions such as approaching the opponent, knocking the opponent down, pushing the opponent out of the ring, and avoiding self-falls or self-exits. To achieve this, design a {reward function} that enables the agent to learn a winning strategy more efficiently and stably. Specifically, implement a Python class named "RewardCalculator", in which the "compute()" function serves as the reward function responsible for calculating the reward.
 The compute() function takes the following parameters: the agent’s previous state (\texttt{s1}), current state (\texttt{s2}), current action (\texttt{a2}), victory status (\texttt{victory\_status}, which can be \texttt{"win"}, \texttt{"loss"}, or \texttt{None}), the opponent’s state after being affected by \texttt{a2} (\texttt{s\_o}), the opponent’s agent object (\texttt{agent}), and the agent’s current success rate (\texttt{rate}). The function must compute a reward based on the observed state transition and the action taken. {Do not provide any additional code except for:} 

\begin{center}
\texttt{def reward(s1, s2, a2, victory\_status, s\_o, agent, rate):}
\end{center}

Ensure that the function returns both a reward value and a dictionary containing key metrics relevant to solving the problem. The reward function must depend on all input parameters and exhibit a certain level of complexity to encourage the agent to identify and exploit the opponent's weaknesses through adversarial strategies. The {core objective of the reward function is to enhance the agent’s success rate (winning\_rate)}. The reward should incentivize the agent to actively defeat the opponent while avoiding unnecessary penalties that could hinder training.

You may invoke the opponent's policy function in the reward function using:

\begin{center}
    \texttt{agent.act\_(observation=s\_o, stochastic=False)[0]}
\end{center}

By comparing the opponent's actions in the current state and in a perturbed state, you can estimate the opponent’s sensitivity to state changes. This sensitivity should be incorporated into the reward function to incentivize the agent to trigger the opponent's instability.

The designed reward function should consist of two main components: one related to increasing the {winning rate} and the other ensuring the agent maintains {proper standing behavior}, which serves as the foundation for winning. The weighting between these two components should be determined based on the agent’s current success rate.

This environment simulates a competitive wrestling scenario in which two 3D bipedal robots engage in a match within a circular arena. Each robot consists of a torso (abdomen), a pair of arms, and a pair of legs, where each leg has three joints and each arm has two. The task is to control one of the robots by applying torques to its joints to defeat the opponent.

The {observation space} consists of 395 dimensions and is structured as follows. The first 24 dimensions (\texttt{obs[0:24]}) represent the robot’s global position and the relative positions of its joints, including the torso’s global position (3D) and the rotational positions of the abdomen, hips, knees, shoulders, and elbows. The next 23 dimensions (\texttt{obs[24:47]}) store generalized velocity information, including the linear velocity of the torso and the angular velocities of all joints. Dimensions \texttt{obs[47:177]} describe the inertial properties of each major body part, including mass, center of mass position, and moments of inertia. Relative velocity information is recorded in \texttt{obs[177:255]}, where each body part has 6 dimensions representing linear and angular velocities. The actuator torques applied to each joint, which control the robot’s movement, are stored in \texttt{obs[255:278]}. External contact forces and torques applied to major body components are found in \texttt{obs[278:356]}. The opponent’s position state, structured identically to \texttt{obs[0:24]}, is stored in \texttt{obs[356:380]}. The next two dimensions (\texttt{obs[380:382]}) encode the relative distances between the two robots along the x and y axes. The torso’s rotation matrix, which defines its orientation in 3D space, is given in \texttt{obs[382:391]}. Finally, the last four dimensions (\texttt{obs[391:395]}) represent the radius of the arena, the robot’s distance to the boundary, the opponent’s distance to the boundary, and the remaining competition time.

The \textbf{action space} consists of a 17-dimensional vector, where each element represents the torque applied to a joint, ranging from \texttt{-0.4} to \texttt{0.4} Nm. The action controls include rotational torques for the abdomen along three axes, rotations of the left and right hip joints, flexion of the left and right knee joints, as well as shoulder and elbow movements.

The match outcome is determined by the following {victory conditions}. A robot wins if the opponent either falls (\texttt{z-coordinate of the torso < 1}) or exits the ring within the competition time. If neither condition is met before the time limit, the agent is considered to have lost.

Your previous reward function is given as:

\begin{center}
    \texttt{\{code\}}
\end{center}

During the PPO training over 800 epochs, the reward component dictionaries obtained every 100 epochs are as follows.

\begin{center}
    \texttt{\{details\}}
\end{center}

where \texttt{step} represents the average number of steps per episode, \texttt{totalreward} and \texttt{groundreward} represent the average episode rewards obtained from the previously defined reward function and the default system reward, respectively, and \texttt{winning\_rate} represents the agent’s success rate, which is the primary metric for optimization.

\textbf{Guidelines for Improving the Reward Function:} Carefully analyze the feedback from the reward components during training and design an improved reward function to better solve the task and enhance the success rate. If the success rate (\texttt{winning\_rate}) remains close to zero, the reward function must be restructured to explicitly incentivize goal-directed behaviors. If certain reward components show little variation, RL may struggle to optimize them. Possible solutions include adjusting scaling or introducing a temperature parameter, redesigning components to provide more meaningful learning signals, or removing components that do not contribute to performance. It is also essential to prevent certain reward components from dominating the total reward by scaling them appropriately. Additionally, applying nonlinear transformations such as \texttt{torch.exp} or normalization techniques can smooth reward values and control their influence.

{Key considerations in reward design:} The reward function should ensure that the agent successfully wins within the competition time, balance the magnitude of different reward components to avoid excessive dominance by any single term, and include both the total reward value and a breakdown of reward components in the output. When refining the reward function, gradually adjust it based on training feedback to align it with the optimization objectives.

\subsection{Reward Evaluator Prompts Example(Sumo-Human)}
Based on the following reward functions, as well as the changes in reward components and win rates during training, identify the best-performing reward function. You are not allowed to output anything except the best reward function and its changes in reward components and win rates. 

The environment description is as follows: This environment simulates a competitive wrestling scenario in which two 3D bipedal robots engage in a match within a circular arena. Each robot consists of a torso (abdomen), a pair of arms, and a pair of legs, where each leg has three joints and each arm has two. The task is to control one of the robots by applying torques to its joints to defeat the opponent.
The {observation space} consists of 395 dimensions and is structured as follows. The first 24 dimensions (\texttt{obs[0:24]}) represent the robot’s global position and the relative positions of its joints, including the torso’s global position (3D) and the rotational positions of the abdomen, hips, knees, shoulders, and elbows. The next 23 dimensions (\texttt{obs[24:47]}) store generalized velocity information, including the linear velocity of the torso and the angular velocities of all joints. Dimensions \texttt{obs[47:177]} describe the inertial properties of each major body part, including mass, center of mass position, and moments of inertia. Relative velocity information is recorded in \texttt{obs[177:255]}, where each body part has 6 dimensions representing linear and angular velocities. The actuator torques applied to each joint, which control the robot’s movement, are stored in \texttt{obs[255:278]}. External contact forces and torques applied to major body components are found in \texttt{obs[278:356]}. The opponent’s position state, structured identically to \texttt{obs[0:24]}, is stored in \texttt{obs[356:380]}. The next two dimensions (\texttt{obs[380:382]}) encode the relative distances between the two robots along the x and y axes. The torso’s rotation matrix, which defines its orientation in 3D space, is given in \texttt{obs[382:391]}. Finally, the last four dimensions (\texttt{obs[391:395]}) represent the radius of the arena, the robot’s distance to the boundary, the opponent’s distance to the boundary, and the remaining competition time.
The \textbf{action space} consists of a 17-dimensional vector, where each element represents the torque applied to a joint, ranging from \texttt{-0.4} to \texttt{0.4} Nm. The action controls include rotational torques for the abdomen along three axes, rotations of the left and right hip joints, flexion of the left and right knee joints, as well as shoulder and elbow movements.
The match outcome is determined by the following {victory conditions}. A robot wins if the opponent either falls (\texttt{z-coordinate of the torso < 1}) or exits the ring within the competition time. If neither condition is met before the time limit, the agent is considered to have lost.

The reward functions, reward component changes, and win rates are listed below.
\begin{center}
    \texttt{\{code\}}
\end{center}

\begin{center}
    \texttt{\{details\}}
\end{center}
\section{Adversarial Rewards}
 We provide the optimal adversarial reward function for each environment.
\subsection{Sumo-Human}
\begin{verbatim}
class RewardCalculator:
    def __init__(self,
                 gamma=0.99,
                 win_bonus=8.0,
                 loss_penalty=-4.0,
                 energy_coef=0.005,
                 step_penalty=-0.001):
        self.gamma        = float(gamma)
        self.win_bonus    = float(win_bonus)
        self.loss_penalty = float(loss_penalty)
        self.energy_coef  = float(energy_coef)
        self.step_pen     = float(step_penalty)
        self._stats       = {"wins": 0, "games": 0, "steps": 0}
        self._ep_steps    = 0

    # --------------------------------------------------------------
    def compute(self, s1, s2, a2, victory_status,
                s_oppo, reward_adv, reward_opp):
        import numpy as np

        # -------- stats / episode step --------
        self._update_stats(victory_status)
        self._ep_steps += 1

        s1, s2, s_oppo, a2 = map(lambda x: np.asarray(x, np.float32),
                                 (s1, s2, s_oppo, a2))

        # -------- adaptive weights  --------
        win_rate   = self._stats["wins"] / max(self._stats["games"], 1)
        combat_w   = 0.3 + 0.7 * win_rate
        stand_w    = 1.0 - combat_w

        # -------- potential helpers --------
        def stand_phi(s):
            height = np.clip(s[2] - 1.0, -1.0, 1.0)
            upright = np.clip(s[390], 0.0, 1.0)
            vel_pen = -0.5 * np.tanh(np.linalg.norm(s[24:27]) / 2.0)
            return 0.6 * height + 0.4 * upright + vel_pen        # (-1,1)

        def combat_phi(s_self, s_enemy):
            arena_r   = s_self[391] if s_self.size > 391 else 3.0
            dist      = np.linalg.norm(s_self[380:382]) / max(arena_r, 1e-6)
            ring_adv  = np.tanh((s_enemy[393] - s_self[392]) /
                                (arena_r * 0.5 + 1e-6))
            tilt      = np.clip(s_enemy[390] - s_self[390], 0.0, 1.0)
            push      = np.clip(1.0 - s_enemy[2], 0.0, 1.0)
            return -0.6 * dist + 0.2 * ring_adv + 0.1 * tilt + 0.1 * push

        phi1 = stand_w * stand_phi(s1) + combat_w * combat_phi(s1, s_oppo)
        phi2 = stand_w * stand_phi(s2) + combat_w * combat_phi(s2, s_oppo)
        dense_r = self.gamma * phi2 - phi1                               

        # -------- sparse & event rewards --------
        sparse_r = 0.5 * reward_adv - 0.1 * reward_opp
        terminal_bonus = (self.win_bonus if victory_status == "win"
                          else self.loss_penalty if victory_status == "loss"
                          else 0.0)

        # -------- penalties --------
        energy_pen = -self.energy_coef * float(np.sum(a2 ** 2))
        step_pen   = self.step_pen

        # -------- total --------
        total = dense_r + sparse_r + terminal_bonus + energy_pen + step_pen

        info = dict(
            total_reward   = float(total),
            dense_reward   = float(dense_r),
            stand_weight   = float(stand_w),
            combat_weight  = float(combat_w),
            sparse_reward  = float(sparse_r),
            terminal_bonus = float(terminal_bonus),
            energy_penalty = float(energy_pen),
            step_penalty   = float(step_pen),
            win_rate       = float(win_rate),
            steps_global   = int(self._stats["steps"]),
            steps_episode  = int(self._ep_steps),
        )
        return float(total), info

    # --------------------------------------------------------------
    def _update_stats(self, victory_status: str):
        if victory_status in ("win", "loss"):
            self._stats["games"] += 1
            self._stats["wins"] += (victory_status == "win")
            self._ep_steps = 0            # reset per episode
        self._stats["steps"] += 1
\end{verbatim}
\subsection{You-Shall-Not-Pass}
\begin{verbatim}
    class RewardCalculator:
    def __init__(
        self,
        gamma: float = 0.995,
        dense_scale: float = 12.0,
        win_bonus: float = 15.0,
        loss_penalty: float = -6.0,
        energy_coef: float = 0.0008,
        step_penalty: float = -0.0005,
        ema_alpha: float = 0.1,
    ):
        self.gamma = float(gamma)
        self.dense_scale = float(dense_scale)
        self.win_bonus = float(win_bonus)
        self.loss_penalty = float(loss_penalty)
        self.energy_coef = float(energy_coef)
        self.step_penalty = float(step_penalty)
        self.ema_alpha = float(ema_alpha)

        self._stats = {"wins": 0, "games": 0, "steps": 0}
        self._ep_steps = 0
        self._ema_win_rate = 0.0

    # --------------------------------------------------------------
    def compute(
        self,
        s1,
        s2,
        a2,
        victory_status,
        s_o,
        reward_adv,
        reward_opp,
    ):
        s1, s2, s_o, a2 = map(lambda x: np.asarray(x, np.float32), (s1, s2, s_o, a2))
        self._update_stats(victory_status)
        self._ep_steps += 1

        # ----------- adaptive weights ----------------
        stand_w = np.clip(0.7 * (1.0 - self._ema_win_rate), 0.3, 0.8)
        block_w = 1.0 - stand_w

        # ----------- potentials ----------------------
        def _stand_phi(s):
            height = np.clip((s[2] - 0.3) / 0.7, 0.0, 1.0)
            vel_pen = -0.3 * np.tanh(np.linalg.norm(s[24:27]))
            return 0.8 * height + vel_pen

        def _block_phi(b, w):
            ahead = np.tanh((w[0] - b[0]) * 2.0)         
            lateral = -np.tanh(np.abs(b[1] - w[1]) * 1.5)
            prog_pen = -np.tanh(-w[0])                  
            return 0.5 * ahead + 0.3 * lateral + 0.2 * prog_pen

        phi1 = stand_w * _stand_phi(s1) + block_w * _block_phi(s1, s_o)
        phi2 = stand_w * _stand_phi(s2) + block_w * _block_phi(s2, s_o)
        dense_r = self.dense_scale * (self.gamma * phi2 - phi1)

        # ----------- sparse & terminal ----------------
        sparse_r = 0.3 * reward_adv - 0.1 * reward_opp
        terminal_r = (
            self.win_bonus if victory_status == "win"
            else self.loss_penalty if victory_status == "loss"
            else 0.0
        )

        # ----------- penalties -----------------------
        energy_pen = -self.energy_coef * float(np.sum(a2 ** 2))
        step_pen = self.step_penalty

        # ----------- total reward --------------------
        total = dense_r + sparse_r + terminal_r + energy_pen

        info = dict(
            total_reward=float(total),
            dense_reward=float(dense_r),
            sparse_reward=float(sparse_r),
            terminal_reward=float(terminal_r),
            energy_penalty=float(energy_pen),
            step_penalty=float(step_pen),
            stand_weight=float(stand_w),
            block_weight=float(block_w),
            ema_win_rate=float(self._ema_win_rate),
            steps_global=int(self._stats["steps"]),
            steps_episode=int(self._ep_steps),
        )
        return float(total), info

    # --------------------------------------------------------------
    def _update_stats(self, victory_status: str):
        if victory_status in ("win", "loss"):
            self._stats["games"] += 1
            self._stats["wins"] += (victory_status == "win")
            current_win_rate = self._stats["wins"] / self._stats["games"]
            self._ema_win_rate = (
                self.ema_alpha * current_win_rate
                + (1.0 - self.ema_alpha) * self._ema_win_rate
            )
            self._ep_steps = 0
        self._stats["steps"] += 1
\end{verbatim}
\subsection{Kick-and-Defend}
\begin{verbatim}
    class RewardCalculator:
    def __init__(
        self,
        win_bonus: float = 2.0,
        loss_penalty: float = -2.0,
        dist_weight: float = 0.02,
        delta_weight: float = 0.03,
        align_weight: float = 0.015,
        threat_weight: float = 0.1,
        threat_radius: float = 2.0,
        gate_penalty_once: float = -0.1,
        stance_penalty_once: float = -0.5,
        energy_weight: float = -0.001,
        adv_diff_weight: float = 0.4,
        time_penalty: float = -0.001,
        clip_limit: float = 115.0,
        align_sigma: float = 0.5,
    ):
        self.W   = win_bonus
        self.L   = loss_penalty
        self.wd  = dist_weight
        self.w_delta = delta_weight
        self.wa  = align_weight
        self.wt  = threat_weight
        self.rt  = threat_radius
        self.wg  = gate_penalty_once
        self.ws  = stance_penalty_once
        self.we  = energy_weight
        self.wr  = adv_diff_weight
        self.wp  = time_penalty
        self.M   = clip_limit
        self.sig = align_sigma

    def compute(self, s1, s2, a2, victory_status, s_o, reward_adv, reward_opp):
        s1 = np.asarray(s1, dtype=np.float32)
        s2 = np.asarray(s2, dtype=np.float32)
        a2 = np.asarray(a2, dtype=np.float32)
        
        if victory_status == "win":
            terminal = self.W
        elif victory_status == "loss":
            terminal = self.L
        else:
            terminal = 0.0

        dx1 = max(0.0, float(s1[381]))
        dx2 = max(0.0, float(s2[381]))
        r_dist  = self.wd * np.tanh(dx2 / 20.0)
        r_delta = self.w_delta * np.tanh((dx2 - dx1) / 4.0)

        dy = float(s2[379])
        r_align = self.wa * np.exp(-dy * dy / (2 * self.sig * self.sig))

        left_gap = float(s2[382])
        right_gap = float(s2[383])
        between = (left_gap * right_gap) < 0.0
        if between and dx2 <= self.rt:
            r_threat = self.wt * (self.rt - dx2) / self.rt
            r_gate   = self.wg
        else:
            r_threat = 0.0
            r_gate   = 0.0

        z1 = float(s1[0])
        z2 = float(s2[0])
        r_stance = self.ws if (z1 >= 0.75 and z2 < 0.75) else 0.0

        r_energy  = self.we * np.sum(a2 * a2)
        r_advdiff = self.wr * (reward_adv - reward_opp)

        total = (
            terminal + r_dist + r_delta + r_align +
            r_threat + r_gate + r_stance +
            r_energy + r_advdiff
        )
        total = float(np.clip(total, -self.M, self.M))

        info = {
            "total_reward": total,
            "terminal": terminal,
            "r_dist": r_dist,
            "r_delta": r_delta,
            "r_align": r_align,
            "r_threat": r_threat,
            "r_gate": r_gate,
            "r_stance": r_stance,
            "r_energy": r_energy,
            "r_advdiff": r_advdiff,
        }
        return total, info
\end{verbatim}

\section{Training and Optimization Settings}

The training process consists of three main stages: reward optimization, adversarial policy pre-training, and fine-tuning with critical state identification.

\textbf{Reward Optimization.}  
We use GPT-4o as both the Reward Generator and Evaluator. In each of four optimization rounds, four candidate reward functions are generated and evaluated, resulting in 32 total API calls. The best-performing reward is selected and used for subsequent attacker training.

\textbf{Adversarial Policy Training.}
The attacker policy is trained using PPO with the selected adversarial reward. PPO is configured with a learning rate of $3 \times 10^{-4}$, a clip range of $0.2$, 2048 rollout steps per update, 4 optimization epochs, and a batch size of 512. The training is performed in parallel with 8 environments to accelerate data collection and improve sample efficiency.

\textbf{Critical State Identification.}  
We adopt the constrained optimization formulation described in Theorem~2, where the number of perturbed states is bounded between $C_2 = 20$ and $C_1 = 40$. Penalty coefficients are set to $d_1 = d_2 = 5$. The critical state selection policy is optimized using PPO with the same hyperparameters as the attacker.

\textbf{Fine-tuning.}  
During fine-tuning, the attacker receives a composite reward defined as $R_{\text{total}} = R + \lambda R_{ft}$, where $R$ is the original adversarial reward and $R_{ft}$ is the deviation-based fine-tuning signal. We set $\lambda = 0.3$ to balance the two components. The critical state selector is re-optimized every $K = 10$ attacker updates to reflect the evolving attack policy.

\section{Broader Impacts}
This work investigates adversarial attacks on reinforcement learning systems in black-box settings. On the positive side, our proposed ARCS framework can serve as a tool for stress-testing RL agents deployed in safety-critical applications, such as autonomous driving or industrial control, helping researchers identify and mitigate potential vulnerabilities. This contributes to the broader goal of developing more robust and trustworthy AI systems.

However, we also acknowledge the potential for misuse. The techniques proposed—particularly the automated construction of adversarial reward functions and identification of critical decision points—could be exploited to disrupt real-world RL systems if applied maliciously. While our experiments are entirely conducted in simulation and for research purposes only, we recognize the importance of responsible use and encourage further discussion on safeguards and ethical deployment.

We believe that the benefits of advancing robustness research outweigh the risks, but caution must be exercised in any real-world application of adversarial techniques.

\end{document}